%% file: main.tex
\documentclass{article}

\usepackage{microtype}
\usepackage{graphicx}
\usepackage{subcaption}
\usepackage{booktabs} 

\usepackage{hyperref}

\usepackage{nccmath}

\usepackage[preprint]{icml2026}

\usepackage{amsmath}
\usepackage{amssymb}
\usepackage{mathtools}
\usepackage{amsthm}

\usepackage[capitalize,noabbrev]{cleveref}

\theoremstyle{plain}
\newtheorem{theorem}{Theorem}[section]
\newtheorem{proposition}[theorem]{Proposition}
\newtheorem{lemma}[theorem]{Lemma}

\theoremstyle{definition}

\theoremstyle{remark}

\icmltitlerunning{go-\mHC: Direct Parameterization of \mHC\ via Generalized Orthostochastic Matrices}

\newcommand{\R}{\mathbb{R}}

\newcommand{\bB}{\mathsf{B}}
\newcommand{\cO}{\mathcal{O}}
\newcommand{\cU}{\mathcal{U}}
\newcommand{\cP}{\mathcal{P}}
\newcommand{\cQ}{\mathcal{Q}}
\newcommand{\cH}{\mathcal{H}}
\newcommand{\cF}{\mathcal{F}}
\newcommand{\vx}{\mathbf{x}}
\newcommand{\Spec}{\mathsf{Spec}}

\newcommand{\mHC}{\textit{m}HC}

\newif\ifshowcontent
\showcontenttrue  

\usepackage{enumitem}

\makeatletter
\def\mathcolor#1#{\@mathcolor{#1}}
\def\@mathcolor#1#2#3{%
  \protect\leavevmode
  \begingroup
    \color#1{#2}#3%
  \endgroup
}
\makeatother

\begin{document}

\twocolumn[
  \icmltitle{go-\mHC: Direct Parameterization of Manifold-Constrained\\Hyper-Connections via Generalized Orthostochastic Matrices}

  \icmlsetsymbol{equal}{*}

  \begin{icmlauthorlist}
    \icmlauthor{Torque Dandachi}{comp}
    \icmlauthor{Sophia Diggs-Galligan}{comp}
  \end{icmlauthorlist}

  \icmlaffiliation{comp}{Independent, San Francisco, United States of America}

  \icmlcorrespondingauthor{Torque Dandachi}{hi@itstorque.com}
  \icmlkeywords{}

  \vskip 0.3in
]
\printAffiliationsAndNotice{}

\begin{abstract}
    Doubly stochastic matrices enable learned mixing across residual streams, but parameterizing the set of doubly stochastic matrices (the Birkhoff polytope) exactly and efficiently remains an open challenge. Existing exact methods scale factorially with the number of streams ($d$), while Kronecker-factorized approaches are efficient but expressivity-limited. We introduce a novel exact parameterization grounded in the theory of \underline{g}eneralized \underline{o}rthostochastic matrices, which scales as $\mathcal{O}(d^3)$ and exposes a single hyperparameter $s$ which continuously interpolates between a computationally efficient boundary and the fully expressive Birkhoff polytope. Building on Manifold-Constrained Hyper-Connections (\mHC), a framework for learned dynamic layer connectivity, we instantiate this parameterization in go-\mHC. Our method composes naturally with Kronecker-factorized methods, substantially recovering expressivity at similar FLOP costs. Spectral analysis indicates that go-\mHC\ fills the Birkhoff polytope far more completely than Kronecker-factorized baselines. On synthetic stream-mixing tasks, go-\mHC\ achieves the minimum theoretical loss while converging up to $10\times$ faster. We validate our approach in a 30M parameter GPT-style language model. The expressivity, efficiency, and exactness of go-\mHC\ offer a practical avenue for scaling $d$ as a new dimension of model capacity.
\end{abstract}

\section{Introduction}

The performance and stability of modern machine learning models increasingly depends on the geometric structure of their underlying matrix parameterizations \cite{bronstein2021geometricdeeplearninggrids, arjovsky2016unitaryevolutionrecurrentneural, saxe2014exactsolutionsnonlineardynamics, zhao2025symmetryneuralnetworkparameter}. While traditional constraints—such as symmetry or orthogonality—have long served as foundational tools \cite{arjovsky2016unitaryevolutionrecurrentneural, wisdom2016fullcapacityunitaryrecurrentneural, henaff2017recurrentorthogonalnetworkslongmemory}, recent breakthroughs have pivoted toward more complex geometric structures \cite{Papillon_2025, bronstein2021geometricdeeplearninggrids}.
A promising frontier for addressing these limitations lies in leveraging manifold theory to map unconstrained optimization spaces onto computationally tractable, constrained geometries \cite{xie2025mhc, chen2026discoveringsymmetrygroupsflow, smith2014optimizationtechniquesriemannianmanifolds, lu2021physicsinformedneuralnetworkshard}. Of particular interest is the Birkhoff polytope \cite{adams2011rankingsinkhornpropagation, tay_sparse_sinkhorn_2020, sinkformers_2022, xie2025mhc}, the manifold of doubly stochastic matrices. By enforcing unit row and column sum constraints, the Birkhoff polytope offers a rigorous framework for learning permutations, alignments, and hyper-connections, yet its practical utility has been bottlenecked by the trade-off between exactness and computational scalability \cite{xie2025mhc}.

\citet{sinkformers_2022} demonstrated that constraining the attention pattern to doubly stochastic matrices both improved accuracy in vision and language models while providing an elegant interpretation of learned parameters as probability trajectories \cite{sinkformers_2022, shahbazi2025espformerdoublystochasticattentionexpected}. \citet{tay_sparse_sinkhorn_2020} investigated memory-efficient implementations of Sinkhorn Attention due to it's ability to mix internal representations and have an associated smooth optimization landscape.

A more recent application of such manifold constraints is in the generalization of residual connections \cite{sinkformers_2022, xie2025mhc}.
Residual connections have become a crucial component in stabilizing the training of deep neural networks \cite{he2016deep, he2016identitymappingsdeepresidual} and a critical building block of modern large language models \cite{gpt2, att_is_all_you_need}.
Ongoing work investigates dynamic generalizations of the residual streams aimed at increasing the topological complexity of residual function connectivity by increasing the number of streams, $d$, while preserving the identity mapping property of the residual stream \cite{zhu2025hyper}. 

\begin{figure*}[t] 
    \centering
    \includegraphics[width=\textwidth]{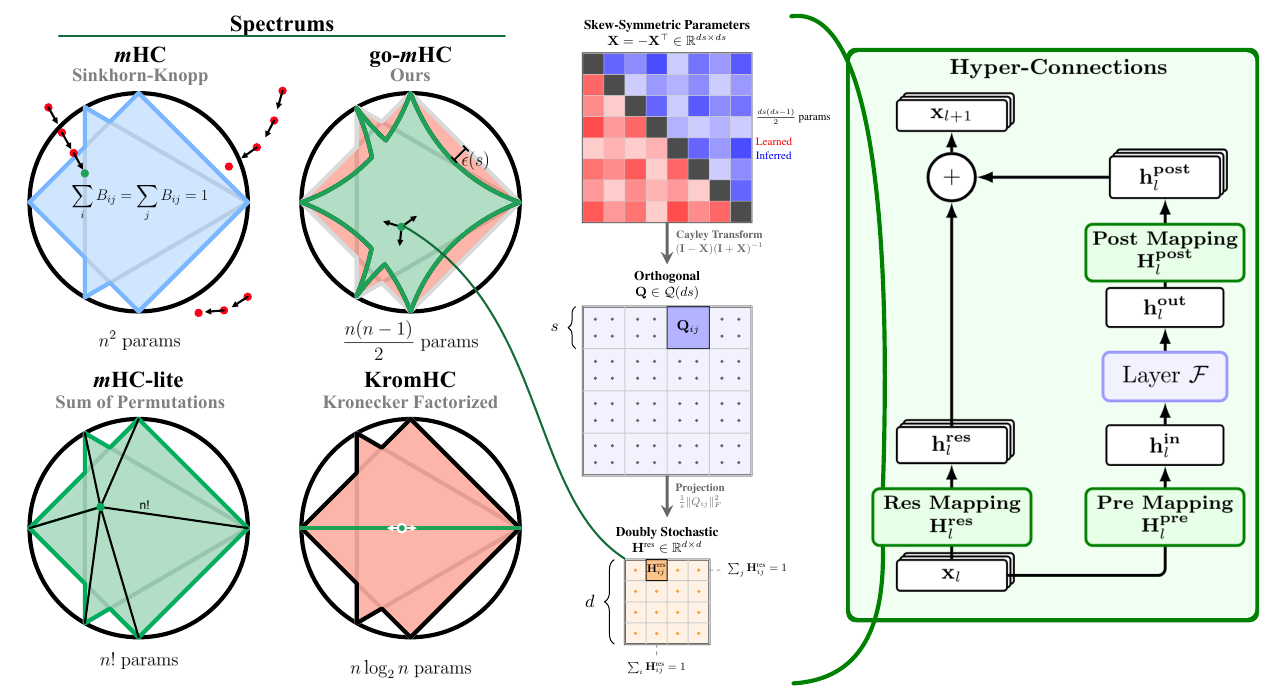}
    \caption{\textbf{Spectral analysis and architectural integration of go-\mHC.} 
\textit{(Left) Spectral Comparison:} Illustration of spectral reach and approximation gaps across several manifold parameterization methods.
\textbf{\mHC} (and the \textit{Sinkhorn-Knopp} algorithm) is not guaranteed to always remain within the manifold after finite iterations iterations (red points). 
\textbf{go-\mHC\ (Ours)} closely approximates the Birkhoff polytope $\mathsf{B}_d$ within an $\epsilon(s)$ error margin using only $\frac{d(d-1)}{2}$ parameters. 
\textbf{\mHC-lite} provides full coverage but requires $d!$ parameters, while \textbf{KromHC} is restricted to tensors of $\mathsf{B}_2$, disallowing non-adjacent cycles of length $>2$ and severely restricting the space of learnable matrices.
\textit{(Middle) Parameterization Pipeline:} Our method maps learned skew-symmetric parameters $\mathbf{X} = -\mathbf{X}^\top \in \mathbb{R}^{ds \times ds}$ to an orthogonal matrix $\mathbf{Q} \in \mathcal{Q}(ds)$ via the Cayley Transform $(\mathbf{I}-\mathbf{X})(\mathbf{I}+\mathbf{X})^{-1}$. The final doubly stochastic matrix $\mathbf{B} \in \mathbb{R}^{d \times d}$ is obtained by a block-wise Frobenius norm projection $\frac{1}{s} \|\mathbf{Q}_{ij}\|_F^2$.
\textit{(Right) Hyper-Connections:} The transformation $\cH^{\text{res}}$ is integrated into the residual stream $\mathbf{x}_{l+1} = \cH_l^{\text{res}}\mathbf{x}_l + \text{Layer } \mathcal{F}(\mathbf{x}_l)$, where the doubly stochastic constraint ensures training stability and improved gradient flow.}
    \label{fig:one}
\end{figure*}

Manifold-Constrained Hyper-Connections (\textbf{\mHC}) \cite{xie2025mhc} provides evidence that constraining the residual connections to the interior of the Birkhoff polytope significantly improves training stability and scalability. These dynamic generalizations reveal the possibility for a new scaling dimension complementing that of traditional scaling laws: the number of residual streams, $d$. Follow-up works have already implemented \mHC\ as the default backbone due to its superior modeling capabilities \cite{cheng2026conditionalmemoryscalablelookup}.

In order to further explore the feasibility of this scaling dimension, two works have since introduced the use of exact parameterizations to prevent error accumulation across layers \cite{yang2026mhclite, zhou2026kromhc}. These methods additionally avoid the overhead introduced by iterative normalization and overcome the need for custom infrastructure optimization. 
\textbf{\mHC-lite} \cite{yang2026mhclite} invokes the Birkhoff--von Neumann theorem to parameterize $\cH^{\text{res}}$ as a convex combination of all $d!$ permutation matrices.
However, the $\cO(d \cdot d!)$ parameter count explodes super-exponentially: at $d=8$, $8!=40{,}320$ permutations are required, rendering this approach memory-prohibitive without sampling approximations that restore an approximation gap.
Conversely, \textbf{KromHC} \cite{zhou2026kromhc} uses the Kronecker products of small doubly stochastic matrices to achieve exact constraints at $\cO(d^2)$ complexity, but in doing so, severely trades off expresivity.

We propose a complementary solution rooted in the algebraic theory of generalized orthostochastic matrices \cite{Gutkin_2013, nechita2023generalized, Korzekwa_2018, Shahbeigi_2021}.
Our framework for exactly learning doubly stochastic matrices attempts to balance the need for efficiency and expressivity, while exposing a hyperparameter $s$ that allows for tunability across these attributes. This opens up the possibility for further exploration and new approaches towards scaling Hyper-Connections.

In this work, we focus on \mHC\ as an example application, but our \underline{g}eneralized \underline{o}rthostochastic construction (go-\mHC) can be applied more broadly as a way to parameterize and learn the interior of the Birkhoff polytope. We characterize our parameterization in the context of a synthetic data task mixing $d$ residual streams and a 30M parameter GPT-style language model.

We specifically contribute analytics and explorations that indicate that our method:
\begin{enumerate}
\item \textbf{Achieves faster convergence and better reachability} than parametrizations invoking the Birkhoff von-neumann theorem, as in \mHC-lite and KromHC

\item \textbf{Scales as $\cO(d^3)$} with the number of residual streams $d$, both in FLOP complexity and parameter count, surpassing the super-exponential scaling of \mHC-lite and KromHC's component factors.

\item \textbf{Requires no custom CUDA kernels}, relying only on batched linear solves and tensor contractions.

\item \textbf{Achieves exact double stochasticity by construction}, with no iterations and no approximation gap.

\item \textbf{Includes a hyperparameter $s$, which serves as a tunable degree of freedom}, interpolating between the orthostochastic boundary ($s=1$) and the dense interior of the Birkhoff polytope ($s \to \infty$).

\end{enumerate}

We believe that this method presents a potential avenue for scaling Hyper-Connections with tractable complexity while maintaining significant expressivity. Additionally, our method can be used simultaneously with KromHC to allow for larger Kronecker factors due to the mitigation of the super-exponential scaling introduced in \mHC-lite. 

\section{Related Work}

\paragraph{Residual connections and their generalizations.}
He et al.\ \cite{he2016deep} introduced skip connections that enable gradient flow via identity shortcuts in ResNets.
This paradigm has evolved to become a key component in the design of large language models (LLMs) owing to its stability and scalability \cite{att_is_all_you_need, gpt2}.
DenseNet \cite{huang2017densely}, FractalNet \cite{larsson2017fractalnet}, and subsequent works explored richer connectivity patterns passing intermediate features across networks. Other works have explored novel ways of increasing the effective size of the residual stream \cite{mak2025residualmatrixtransformersscaling, pagliardini2024denseformer, ResiDual_DoublePaper, AltUp}, examples include: ResiDual \cite{ResiDual_DoublePaper}, which explored improvements to layer normalization by doubling the size of the residual stream, and AltUp \cite{AltUp}, which widened the effective hidden dimension while maintaining low compute by processing partitioned representations in different layers.

Hyper-Connections (HC) \cite{zhu2025hyper} proposed the first learnable, multi-stream generalization of residual streams allowing for dynamic mixing matrices. A sketch of one such residual block is shown in the green box of figure~\ref{fig:one}. HC can theoretically enable models to learn depth and width connections to dynamically rearrange layer connectivity and potentially adds a new avenue for scaling.
Multiple approaches have expanded on this work since, such as splitting
the feature dimension into independent sub-streams \cite{zhu2025frac} and applying HC to vision and graph neural networks \cite{zhu2025hyper, mhcgnn}.

However, the unconstrained dynamic connectivity of residual blocks in HC removes the explicit guarantee of the identity property -- introducing the risk of gradient instability after repeated applications of the mixing terms \cite{xie2025mhc}. This instability impacts deep networks and restricts scalability due to vanishing and exploding gradients. Subsequent works investigate constraining this connectivity to a manifold that bounds gradient updates through the spectral norm in order to address this instability.

\paragraph{Manifold-Constrained Hyper-Connections.}
Building on top of Hyper-Connections, Manifold-Constrained Hyper Connections (\mHC) constrain the dynamic residual matrices to a manifold to improve training stability while still allowing for dynamic connections \cite{xie2025mhc}. \mHC\ primarily changes the definition of $\cH_l^{\text{res}}$ such that it converges to the Birkhoff polytope $\mathsf{B}_d$, where $\mathsf{B}_d$ denotes the closed set of doubly stochastic matrices, defined as positive matrices with the following property:
$$
\mathbf{M}^\top \mathbf{J}_d = \mathbf{M} \mathbf{J}_d = \mathbf{J}_d,
$$
where $\mathbf{J}_d$ denotes the $d\times d$ ones matrix. See the blue spectral region in figure~\ref{fig:one} of \mHC.

While \mHC\ attempts to enforce this constraint after weight updates using the Sinkhorn-Knopp (SK) algorithm \cite{sinkhorn1967}, an iterative normalization method for generating doubly stochastic matrices, recent works \cite{yang2026mhclite, zhou2026kromhc} aim to build HC networks where the learned map is exactly constrained to the polytope. This has two major implications: 1. implementing the SK kernel requires careful kernel-level optimization for efficiency, and 2. finite SK iterations introduce an approximation gap that accumulates with network depth.

\mHC-lite \cite{yang2026mhclite} attempts to circumvent the approximation gap and customized kernel by making use of the Birkhoff--von Neumann theorem to define the learned connections as a convex combination of the $d!$ permutation matrices (where $d$ is the number of residual streams), at an $\cO(dC\cdot d!)$ cost. This method works well for small $d$, however, this is not tenable in the limit of scaling $d > 8$. KromHC \cite{zhou2026kromhc} sacrifices expressivity and complete coverage of $\mathsf{B}_d$ by using tensor products to train residual connections on a much smaller set of exactly double-stochastic matrices with the added benefit of costing only $\cO(d^2 C)$.

\paragraph{Cayley transforms in neural networks.}

The Cayley transform has been used to parameterize orthogonal matrices in recurrent networks \cite{helfrich2018cayley}, and comparisons with the matrix exponential show similar quality and better stability at lower FLOP cost \cite{lezcano2019cheap}.
Li et al. (\citet{li2020efficient}) further demonstrate that iterative Cayley-based optimization outperforms matrix exponential approaches for Stiefel manifold problems.

\paragraph{Doubly Stochastic \& Orthostochastic matrices.}

Multiple works have investigated the use of doubly stochastic matrices in the context of neural network parameterization \cite{sinkformers_2022, adams2011rankingsinkhornpropagation, tay_sparse_sinkhorn_2020}. \citet{Gutkin_2013} introduced generalized orthostochastic matrices, and more recently, \citet{nechita2023generalized} proved that the set of generalized $s$-orthostochastic matrices is contained in $\bB_d$ and spans all of $\bB_d$ as $s \to \infty$. Other works have investigated further properties of the spectral span of the Birkhoff polytope \cite{kim_conjectures_2022} and the geometry of $s$-orthostochastic matrices in the quantum setting \cite{Karol_Zyczkowski_2003, Korzekwa_2018, Shahbeigi_2021}.
These theoretical foundations underlie our choice of parameterization.
To our knowledge, we are the first to apply this construction in a machine learning setting and, more specifically, to learned residual stream routing in deep networks.

\section{go-\mHC: Generalized Orthostochastic Manifold-Constrained Hyper-Connections}
\label{sec:method}

\subsection{Hyper-Connections}

Following the illustration in figure~\ref{fig:one} and prior work \cite{zhu2025hyper,xie2025mhc}, we define a single Hyper-Connection block's $d$-stream state $\vx_l \in \R^{d \times B \times L \times D}$ with the recursion relation including a residual block $\cF$:
\begin{equation}
  \vx_{l+1} = \cH_l^{\text{res}}\, \vx_l + (\cH_l^{\text{post}})^\top\, \cF\!\left(\cH_l^{\text{pre}}\, \vx_l,\, \mathbf{W}_l\right),
  \label{eq:hc}
\end{equation}
where $\cH_l^{\text{res}} \in \R^{d \times d}$ mixes residual streams, $\cH_l^{\text{pre}} \in \R^{1 \times d}$ aggregates streams into the branch input, and $\cH_l^{\text{post}} \in \R^{1 \times d}$ distributes branch output back to streams. With $\vx_{0}$ defined as $d$ repetitions of the initial input as done in \citet{zhu2025hyper}. In the limit of $\cH_l^{\text{res}}=\cH_l^{\text{pre}}=\cH_l^{\text{post}}=[1]$, this is the typical single-stream residual connection.

To allow for a dynamic topology of the network, the $d$-stream mixing, readout and write-in terms are
allowed to be dynamic with learned parameters $\{(\alpha^s_l, \mathbf{W}^s_l, \mathbf{b}^s_l)\}_{s \in \{\text{pre, post, res}\}}$:

\begin{align}
    &\mathbf{x}_l^{\prime} = \text{RMSNorm}(\mathbf{x}_l), \\
    &\cH_l^{\text{pre}} = \text{sigmoid}\left( \alpha_l^{\text{pre}} \mathbf{x}_l^{\prime} \mathbf{W}_l^{\text{pre}} + \mathbf{b}_l^{\text{pre}} \right), \\
    &\cH_l^{\text{post}} = 2\cdot\text{sigmoid}\left( \alpha_l^{\text{post}} \mathbf{x}_l^{\prime} \mathbf{W}_l^{\text{post}} + \mathbf{b}_l^{\text{post}} \right), \\
    &\cH_l^{\text{res}} = \cP\left( \text{mat} \left( \alpha_l^{\text{res}} \hat{\mathbf{x}}_l' \mathbf{W}_l^{\text{res}} + \mathbf{b}_l^{\text{res}} \right) \right)\label{eq:hres_proj_def}
\end{align}

In \mHC, the manifold constraint operator $\cP=\cP_{\text{SK}}$ is realized via 20 iterations of the Sinkhorn-Knopp algorithm, where rows and columns are normalized iteratively.

As done in \mHC-lite and KromHC, we opt to keep the above formulation
and choose to only modify $\cH_l^{\text{res}}$ in equation~\ref{eq:hres_proj_def} to be an exact projection.
We instead take $\cP=\cP_{\text{go}}$ as a map from $\frac{ds(ds-1)}{2}$ parameters generated by our learned parameters $(\alpha^{\text{res}}_l, \mathbf{W}^{\text{res}}_l, \mathbf{b}^{\text{res}}_l)$ to the target $d\times d$ mixing matrix using the algebraic theory of generalized orthostochastic matrices.

\subsection{Generalized Orthostochastic Matrices}
\label{sec:gen_ortho}

A matrix $\mathbf{B} \in \bB_d$ is called \emph{orthostochastic} if $\mathbf{B}_{ij} = \mathbf{Q}_{ij}^2$ for some orthogonal matrix $\mathbf{Q} \in \cQ(n)$. When $n \geq 3$, the set of orthostochastic matrices are proper non-convex subsets of $\bB_d$ \cite{convexity_3by3, alg_and_geo_inside_Bn, nechita2023generalized}.

For $d, s \in \mathbb{Z}^+$, let $\mathbf{Q} \in \cQ(ds)$ be viewed as a $d \times d$ block matrix with $s \times s$ blocks $\mathbf{Q}_{ij}$. Define:
\begin{equation}
    \Phi_{d,s}(\mathbf{Q})_{ij} = \frac{1}{s}\,\|\mathbf{Q}_{ij}\|_F^2 = \frac{1}{s}\sum_{k=1}^{s}\sum_{l=1}^{s} (\mathbf{Q}_{ij})_{kl}^2.
  \label{eq:phi}
\end{equation}
\textbf{Proposition} (Nechita et al., Theorem 2.8 \cite{nechita2023generalized}): $\Phi_{d,s}(\mathbf{Q}) \in \bB_d$ for all $\mathbf{Q} \in \cQ(ds)$ and all $s$, and the set $\{\Phi_{d,s}(\mathbf{Q}) : \mathbf{Q} \in \cQ(ds)\}$ converges to $\bB_d$ as $s \to \infty$.

The parameter $s$ determines the rank of the block-wise Frobenius projection, where larger $s$ provides a more faithful representation of the polytope interior. At $s=1$, we recover the orthostochastic set by definition. The geometry of the orthostochastic set is a concave cycloid that meets the permutation vertices of $\mathsf{B}_d$ without completely occupying the interior of $\mathsf{B}_d$ when $d>2$. Theorem 2.8 guarantees convergence to span the full polytope $\bB_d$ as $s \to \infty$ with monotonically increasing volume. Based on this and results in \cite{schwarz1980infinite} we choose to primarily focus on $s=2$ as our default variant of go-\mHC, but we show comparisons with $s=1$ and $s=3$ whenever mentioned. We further discuss the implications of $s$ on geometry in appendix~\ref{app:geometry}.

\begin{figure}[hbt!]
    \centering
    \vspace{-0.3cm}
    \includegraphics[width=0.75\linewidth]{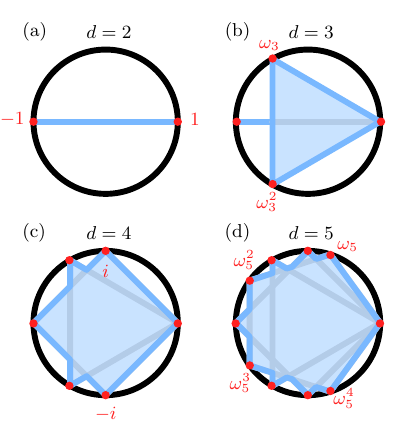}
    \vspace{-0.35cm}
    \caption{Illustration of the Karpelevi\v{c} region -- the spectrum of stochastic matrices -- for $\mathsf{B}_d$ with $d=2\text{ to } d=5$. The black circle corresponds to the unit disc and the red points are the eigenvalues $\omega$ of the $d\times d$ permutation matrices contained in the polytope. The affine space defined by doubly stochastic matrices is strictly contained within the volume of the blue region, except for in $\mathsf{B}_2$ where the space is a line. The spectrum of $\mathsf{B}_d$ is the union of the spectrum of $\mathsf{B}_{d-1}$ and the polygon with $d$ sides inscribed on the unit disc.}
    \label{fig:spectrum_illustration} 
\end{figure}

\subsection{Learning the $s$-Orthostochastic Manifold}

We write out our full projection, replacing equation~\ref{eq:hres_proj_def}, below:
\begin{align}
    &\mathbf{A}_l = \text{skew} (\alpha_l^{\text{res}} \mathbf{x}_l^{\prime} \mathbf{W}_l^{\text{res}} + \mathbf{b}_l^{\text{res}} ),\\
    &\mathbf{Q}_l = (\mathbf{I}-\mathbf{A}_l)(\mathbf{I}+\mathbf{A}_l)^{-1}, \label{eq:cayley_q}\\
    &\cH_l^{\text{res}} = \Phi_{d, s}(\mathbf{Q}_l), \label{eq:Hres}
\end{align}
where the function $\text{skew}$ maps vectors of size $\frac{ds(ds-1)}{2}$ to a $ds\times ds$ skew-symmetric matrix
$\left ( \mathbf{A}_l^\top = -\mathbf{A}_l\right)$. Equation~\ref{eq:cayley_q} makes use of the Cayley transform to map
$\mathbf{A}_l$ to the space of special orthogonal matrices $\left ( \mathbf{Q}_l^\top \mathbf{Q}_l=\mathbf{I}\right)$. We can then use $\Phi_{d, s}$ from equation~\ref{eq:phi} to map $\mathbf{Q}_l$ to an exactly doubly stochastic matrix $\cH_l^{\text{res}}\in\mathsf{B}_d$ in equation~\ref{eq:Hres}.

The Cayley transform specifically generates matrices with $\det (\mathbf{Q}_l)=+1$, which 
is sufficient for full coverage since the orthostochastic transform takes the $||\cdot ||_F^2$ of the block matrices and removes information about reflections, keeping only the positive determinant solutions. This is also advantageous due to the path-connectedness of the special orthogonal group, the group of all ``pure'' rotations in $\mathbb{R}^d$ -- more on this in appendix~\ref{app:proof_of_connectedness}.

By leveraging the Cayley transform to map learned parameters onto $\mathbf{Q}_l$, the necessity for a softmax activation on the inputs is bypassed. The Cayley transform is inherently norm-preserving which ensures numerical stability without the need for the additional normalization step. We suspect that this replacement of softmax with the Cayley transform is one of the reasons that go-\mHC\ accelerates convergence, due to its avoidance of the saturation regions of the exponential function (see results in section~\ref{sec:convergence}). While additionally mitigating potential vanishing gradients (appendix~\ref{app:mhclite_vanishing_gradients}) and maintaining a well-conditioned Jacobian throughout training. These vanishing gradients trade-off how close mixing matrices near the boundary of the polytope can be learned and the ability to ``unlearn them'' and return to the Barycenter.

While \citet{xie2025mhc} use the Amax Gain Magnitude (AGM) to quantify the worst-case amplification for $\cH_{l\to L}^{\text{res}} = \prod_{i=1}^{L-l} \cH_{L-i}^{\text{res}}$:
\begin{align}
  \text{AGM}^{\text{fwd}}(\cH_{l\to L}^{\text{res}}) = \max_{i} \sum_j |(\cH_{l\to L}^{\text{res}})_{ij}|,\\
  \text{AGM}^{\text{bwd}}(\cH_{l\to L}^{\text{res}}) = \max_{j} \sum_i |(\cH_{l\to L}^{\text{res}})_{ij}|.
\end{align}

We don't rely on this metric due to the exactness of $\cP_{\text{go}}$:
by definition, $\max_{i} \sum_j |(\cH_{l}^{\text{res}})_{ij}|=\max_{j} \sum_i |(\cH_{l}^{\text{res}})_{ij}|=1$. Moreover, since $\bB_d$ is closed under matrix multiplication, the product $\cH_{l\to L}^{\text{res}}$ remains doubly stochastic regardless of depth. Since go-\mHC, \mHC-lite, and KromHC all perform equally well on the AGM metric due to their exactness, we use two additional metrics to benchmark our method: expressivity and convergence.

\section{Expressivity}
\begin{figure}[t!]
    \centering
    \includegraphics[]{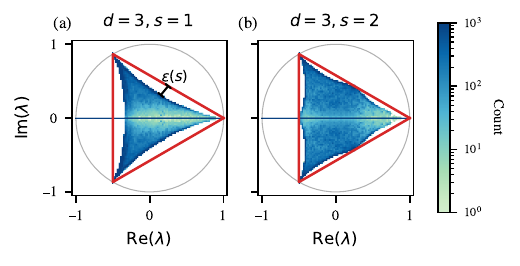}
    \caption{Histogram of the spectral reach of a toy model implementing the map $\cP_{\text{go}}$ to $\mathsf{B}_3$ ($d=3$) for different values of $s \in \{1, 2\}$ on random targets.
        \textbf{(a)} When $s=1$, the $d=3$ model is able to learn maps with full spectral coverage within the hypocycloid--covering the triangular region (Karpelevi\v{c} region of $\mathsf{B}_3 \setminus \mathsf{B}_2$) up to error $\epsilon(s=1)$, as well all real eigenvalues on the interval $[-1, 1]$ (Karpelevi\v{c} region of $\mathsf{B}_2$).
        \textbf{(b)} $s=2$ covers a larger region, with a smaller gap in the spectra $\epsilon(s=2)<\epsilon(s=1)$. 
        }
    \label{fig:spectra_numerics}
\end{figure}

Motivated by the effect of bounding the spectrum on having stable gradients (see appendix~\ref{app:gradient_stability_from_spectral_radius}) \cite{xie2025mhc, yang2026mhclite}, we choose to evaluate the expressivity of
the different methods by using the spectrum of matrices constructed with $\cP_{\mathsf{M}}$ as a surrogate for expressivity -- quantifying how many unique matrices can be spanned using their set of eigenvalues. We rely on surrogate measures for the expressivity of doubly stochastic matrices due to the complicated geometry of the Birkhoff polytope and given that finding the exact volume of $\mathsf{B}_d$ is an open problem \cite{polytope_volume, beck2005ehrhartpolynomialbirkhoffpolytope, kim_conjectures_2022}. Formally, we define the spectral reach of $\mathbf{H}^{\text{res}}$ matrices constructed using a given method $\mathsf{M}$ as:
\begin{equation}
    \{\lambda \in \mathbb{C} \;\vert\; \mathbf{H}^{\text{res}}\vec v = \lambda \vec v,\; \mathbf{H}^{\text{res}} = \cP_{\mathsf{M}}(\cdots),\; \vec v\neq \vec 0\}
\end{equation}

\subsection{Spectral Reach of $\mathsf{B}_d$ Subsets}
\label{sec:spectral_analysis}

A mapping $\mathsf{M}$ with high spectral reach generates matrices spanning a larger volume of the Karpelevi\v{c} region—the compact subset of the unit disk containing all possible eigenvalues of stochastic matrices. This region's boundary is defined by algebraic curves connecting the roots of unity \cite{Karol_Zyczkowski_2003, kim_conjectures_2022}:
$$\{e^{i2\pi\frac{j}{d'}} \mid j \in [1, d'], d' \leq d\}$$

\begin{figure*}[t!]
    \centering
    \includegraphics[width=\textwidth]{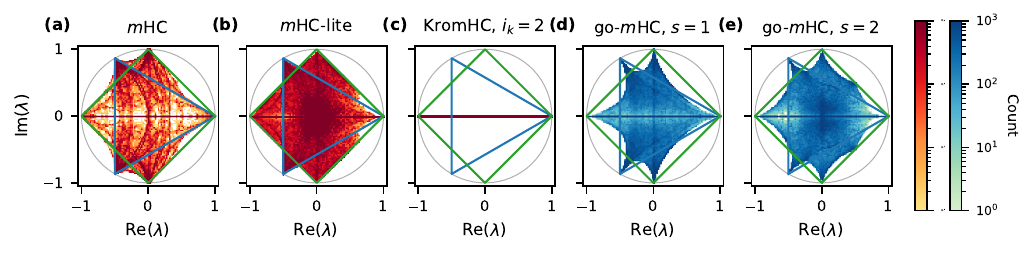}
    \caption{Histogram of the spectral reach of a toy model implementing the maps $\cP_{\text{SK}}$, $\cP_{\text{lite}}$, $\cP_{\text{krom}}$, and $\cP_{\text{go}}$ on random targets in  $\mathsf{B}_4$.
        The blue and green lines in \textbf{(a-e)} are the boundary of the spectrum of the $\mathsf{B}_3 \setminus \mathsf{B}_2$ and $\mathsf{B}_4 \setminus \mathsf{B}_3$ polytopes respectively,
        as illustrated in figure~\ref{fig:spectrum_illustration}.
        \textbf{(a)} \mHC (implemented via $\cP_{\text{SK}}$) contains points outside of $\mathsf{B}_4$ (this effect is exaggerated for illustration via initial conditions with large variance).
        \textbf{(b)} \mHC-lite shows good expressivity, filling the region near the boundary almost completely, without extending past it.
        \textbf{(c)} KromHC fails to represent the complex region inside of $\mathsf{B}_4$ -- since the ``shadow'' is $1$-dimensional, it fills an infinitesimal slice of the full polytope.
        \textbf{(d)} For $s=1$, we see go-\mHC represent a sizeable portion of the region, although it cannot represent all matrices near the boundary of the polytope.
        \textbf{(e)} With $s=2$, go-\mHC begins to represent larger regions of $\mathsf{B}_4$, filling the region more densely.}
    \label{fig:all_spectra}
\end{figure*}

An illustration of the Karpelevi\v{c} region for $d=2\text{ to } d=5$ is shown in figure~\ref{fig:spectrum_illustration}. If a method's spectrum is clustered within a small sub-region, it is poorly expressive; conversely, a method that ``fills'' the Karpelevi\v{c} region captures the diverse dynamics represented by the polytope's extremal points. The spectral reach of \mHC, KromHC, and go-\mHC\ can all be found in ~\ref{fig:spectra_numerics}\&\ref{fig:all_spectra}.

\subsubsection{Eigenvalue Coverage of $s$-Orthostochastic Matrices}

The eigenvalue distribution of $d \times d$ $s$-orthostochastic matrices is fundamentally constrained by their membership in the Birkhoff polytope $\mathsf{B}_d$ and residence within the Karpelevi\v{c} region. While $1$-orthostochastic matrices exhibit a restricted, hypo-cycloidal boundary, increasing the parameter $s$ provides a progressively denser approximation of the full $\mathsf{B}_d$ \cite{nechita2023generalized}. Consequently, as $s$ increases, the spectral support of $s$-orthostochastic matrices expands to fill the Karpelevi\v{c} region, approaching the density of general doubly stochastic matrices, as demonstrated in figure~\ref{fig:spectra_numerics}.

\subsubsection{The Spectral Limitation of Kronecker Products}

While KromHC is the most efficient of the proposed methods, it imposes a symmetry on the $d$-streams that significantly prunes the space of potential mixing terms in $\cH^{\text{res}}$, as shown in figure~\ref{fig:all_spectra}c. We demonstrate that KromHC parameterizes only a sparse subset of $\mathsf{B}_d$ for $d>2$ by analyzing the spectral reach of $k$-fold Kronecker products. Specifically, we prove that the spectral space of a $k$-fold Kronecker product of doubly stochastic matrices is invariant to the index $k$, effectively restricting the expressive reach of the resulting constraint to that of its constituent base manifold. A formal proof of this invariance is provided in Appendix~\ref{app:kron_kfold_2_proof}.

To illustrate this expressivity bottleneck, consider the $4 \times 4$ case where $\mathbf{A}, \mathbf{B} \in \mathsf{B}_2$. Their individual spectra are defined as $\{1, \lambda_A\}$ and $\{1, \lambda_B\}$ for $\lambda \in [-1, 1]$. The spectrum of the Kronecker product $\mathbf{A} \otimes \mathbf{B}$ is thus $\{1, \lambda_A, \lambda_B, \lambda_A \lambda_B\}$. Crucially, since the product $\lambda_A \lambda_B$ remains real and within the interval $[-1, 1]$, the resulting eigenvalues are strictly contained within the spectral hull of $\mathsf{B}_2$ (see Figure~\ref{fig:all_spectra}b for emperical results and a full proof in \ref{app:kron_kfold_2_proof}). Due to this closure, the spectral reach of the Kronecker-parameterized manifold is invariant to the number of products and fails to expand into the broader Karpelevi\v{c} region available to the full Birkhoff polytope, representing increasingly smaller fractions of the full polytope. By volume, this is a negligible portion of the full $\mathsf{B}_d$ when $d > i_k$, and we show emperically in appendix~\ref{app:kron_kfold_2_app_err} that this translates to a $\cO(d)$ additional error in the minimum achievable loss compared to \mHC-lite and go-\mHC.

The collapse in spectral volume for Kronecker-based transforms represents a significant loss in functional diversity. Specifically, these constraints fail to represent cyclic permutations of order greater than $i_k=2$ (e.g., $1\to 2\to 3\to 1$), effectively preventing the model from learning complex periodic dependencies. Furthermore, the restriction to real eigenvalues in the $2 \times 2$ base case implies that the system can only model pure diffusion (information spreading) without advection (directed rotation of information), severely limiting the directional flow of signals within the reservoir.

\subsection{Parameter Complexity}
go-\mHC's per-layer parameter complexity is similar to that of \mHC, and much lower than that of \mHC-lite.
We note that it is $\cO(dC\cdot d^2s^2)$, similar to \mHC's $\cO(dC\cdot d^2)$ and much better than the exact solution in
\mHC-lite that requires $\cO(dC\cdot d!)$. KromHC trades off expressivity for a much smaller parameter
count that is $\cO(dC\cdot d)$. See table \ref{tab:param_count_table} for exact expressions and figure~\ref{fig:param_count}
for a visual comparison for residual streams of size $d\in [2,32]$.

\begin{figure}[t]
    \centering
    \includegraphics[width=0.85\linewidth]{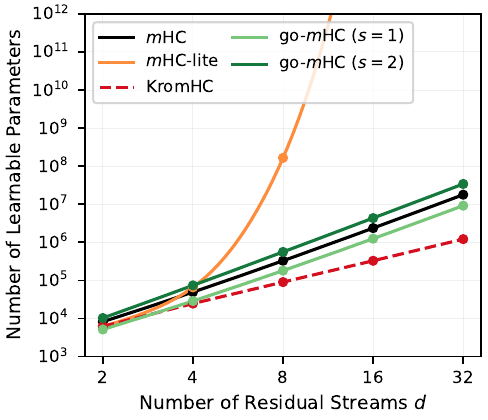}
    \caption{Comparing the total number of learnable parameters in \mHC\  with its exact variants. \textbf{go-\mHC} (green), same scaling as \mHC\ regardless of $s$, with $s=1$ requiring less parameters due to the skew-symmetry. \textbf{\mHC-lite} (orange) scales with $\cO(d!)$ despite its exactness and completeness over $\mathsf{B}_d$, blowing up beyond $d=8$ and becoming intractable. \textbf{KromHC} (red) scales the fastest $\cO(d\log d)$ faster than \mHC\ by trading off expressivity and imposing a strong symmetry on residual stream interactions. KromHC relies on the factorization of $d$ and we therefore only analyze the best case parameter count when $d$ is a power of 2.}
    \label{fig:param_count}
\end{figure}

\begin{table}[ht]
    \centering
    \caption{Total Parameter Counts}
    \label{tab:param_count_table}
    \begin{tabular}{ll}
        \toprule
        \textbf{Method} & \textbf{Parameter Count} \\
        \midrule
        \mHC            & $\mathcolor{blue}{(dC + 1) d^2} + 2d^2C + 2d + 3$ \\
        go-\mHC\ ($s$)   & $\mathcolor{blue}{(dC + 1) \frac{ds(ds - 1)}{2}} + 2d^2 C + 2d + 3$ \\
        \mHC-lite       & $\mathcolor{blue}{(dC + 1) d!} + 2d^2 C + 2d + 3$ \\
        KromHC         & $\mathcolor{blue}{(dC + 1) \sum_{k=1}^{K} (i_k!)} + 2d^2 C + 2d + 3$ \\
    \end{tabular}
\end{table}

We note that KromHC and go-\mHC\ are complementary methods that can be used simultaneously. Using both methods, we can learn tensor products of
generalized orthostochastic matrices, which would allow KromHC to gain back some expressivity (using larger product sizes, $i_k$) at quadratic expressivity cost instead of factorial:
$\mathcolor{blue}{(dC+1)\cdot \sum_{k=1}^{K} \frac{1}{2}si_k(si_k-1)}$. This allows the expressivity to become a tunable parameter that scales as $\cO(s^2i_k^2\cdot \log_{i_k}d)$ instead of $\cO(i_k!\cdot\log_{i_k}d)$ -- allowing for greater flexibility across a variety of compute-limited applications.

\subsection{FLOP Complexity}

We additionally track the additional overhead of our method on top of Hyper-Connections. In unconstrained HC, experiments show negligible incurred FLOP complexity ($\approx 0.15\%$ increase) during a forward pass in a 7B parameter model \cite{zhu2025hyper}. Projecting onto a manifold with $\mathcal{P}_\mathsf{M}$ incurs an additional FLOP complexity due to the procedure introduced by $\mathsf{M}$. In \mHC, this is significant ($\approx 8\%$ additional time overhead for a 27B model) even after the development of custom kernels and major infrastructure improvements to perform the Sinkhorn-Knopp iterations. KromHC and \mHC-lite (in the small $d$ regime) both show significant speed ups due to the removal of the iterative updates. We here calculate the FLOP complexity of our method to show that it is superexponentially better than \mHC-lite and incurs a small relative overhead compared to KromHC for the increased expressivity. For example, for $d=16$, the flop complexity of go-\mHC\ is $2$ orders of magnitude slower than KromHC, compared to the $16$ orders of magnitude imposed by \mHC-lite. It is also approximately the same order of magnitude in complexity as \mHC\ with SK iterations due to $\cO(d)\approx \cO(S)$.

In general, the model performance improvement is larger than the incurred computational
cost; however, when considering scaling the number of residual streams to allow for richer
topology, the complexity begins to set a practical limit -- we hope to provide a 
middle ground method that provides a good balance of incurred FLOP cost while maintaining
rich enough expressivity to allow for scaling.

\begin{table}[ht]
    \centering
    \caption{Theoretical FLOP complexity per layer. FLOP counts demonstrate that go-\mHC\ provides a practical middle ground for scaling residual streams, bypassing the superexponential cost of \mHC-lite and the iterative overhead of SK iterations in \mHC, with $S$ being the number of SK iterations. Nominal values for the constants below are: $S=20, i_k=2$, and $s\in\{1, 2\}$. See appendix \ref{app:complexity_full} for our full FLOP estimates.}
    \label{tab:flop_complexity}
    \begin{tabular}{ll}
        \toprule
        \textbf{Method} & \textbf{FLOP complexity (per layer)} \\
        \midrule
        \mHC            &  $\cO\left(C\cdot S\cdot d^2\right)$ \\
        go-\mHC\ ($s$)   & $\cO\left(C\cdot s^3 \cdot d^3\right)$ \\
        \mHC-lite       & $\cO\left(C\cdot d^2\cdot d!\right)$ \\
        KromHC         & $\cO\left(C\cdot\log_{i_k}(d)\cdot i_k^2\cdot (i_k!) + C\cdot d^2\right)$ \\
    \end{tabular}
\end{table}

\section{Convergence}
\label{sec:convergence}

The performance and convergence properties of $\cP_\mathsf{M}$ are fundamentally governed by the geometry of the underlying loss landscape introduced by $\mathsf{M}$ and the stability of the associated gradient signals. In the context of \mHC\, this indicates that any target matrix should be reachable regardless of initial conditions using finite optimization iterations. We do expect go-\mHC\ to have learnable gradients, be convergent and fully connected (able to get to any target matrix). See appendix \ref{app:proof_of_connectedness} for more details on this claim and associated proofs.

Not all matrix parameterizations are learnable and fully connected. \mHC-lite should generally perform well by choosing a good initialization, but we suspect that this does not mitigate the fact that it may become very difficult to relearn an $n$-stream transform after having previously converged as the model trains for longer due to the potential of vanishing gradients (a heuristic example is from needing to remap streams due to generalization). Numerics and further discussion can be found in the appendix \ref{app:mhclite_vanishing_gradients}.

\subsection{Synthetic Stream Mixing Performance}

To establish a baseline for performance, we first conduct a synthetic ``toy model'' experiment designed to evaluate the reconstruction capabilities of
the proposed manifold parameterizations in a controlled environment. 
The framework generates a set of ground truth doubly stochastic matrices $\mathbf{T}_i$ and a dataset of dense activations in the residual stream $\mathbf{X}_j\in \mathcal{X}$. We then calculate noisy targets $\mathbf{Y}_{i,j}$ representing
a target latent representation within the residual stream:
\begin{equation}
    \mathbf{Y}_{i,j}=\mathbf{T}_i\mathbf{X}_j + \mathbf{\xi}
\end{equation}

with $\mathbf{\xi} \sim \epsilon\cdot \mathcal{U}(0, 1)$ and noise magnitude $\epsilon$.

We assess the fidelity of various manifold parameterizations by minimizing the mean squared error between $\mathbf{Y}$ and $\mathbf{\tilde Y}$ calculated from the learned residual mapping $\cH^{\text{res}}_i$:
\begin{align}
    &\mathbf{\tilde Y} =\cH^{\text{res}}_i\mathbf{X}_j \\
    &\mathcal{L}_i=\sum_j|| (\cH_i^{\text{res}}X_j) - (\mathbf{T}_iX_j+ \mathbf{\xi})||^2_2
\label{eq:toy_model_full}
\end{align}

We investigate the regime of random dense vectors $\mathbf{X}_j$ in this section to mimic the residual stream's dense non-priveleged basis \cite{transformer_circuits, lawson2025residual, brown2023privilegedconvergentbasesneural}, however, appendix~\ref{app:sparsity} showcases that the introduction of sparsity in our toy model does not impact our claims.

\begin{figure}[ht]
    \centering
    \includegraphics[width=\linewidth]{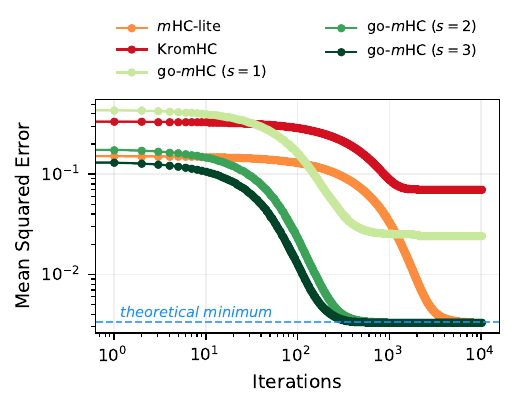}
    \caption{A representative sample of a training curve on a synthetic $4$-stream mixing task with a fixed learning rate $\eta=10^{-3}$, a batch size of $C=64$, dataset size $|\mathcal{X}|=100$ and noise mangitude $\epsilon=10^{-1}$. We see that both \textbf{\mHC-lite} (orange) and \textbf{KromHC} (red) take longer to converge than \textbf{go-\mHC} (greens). We also note that go-\mHC\ with $s\geq2$ saturates at the same loss as that of \mHC-lite, which happens to be the theoretical minimum achievable loss due to the noise term in equation \ref{eq:toy_model_full}. KromHC has the worst performance due to its expressivity-speed trade-off and go-\mHC\ with $s=1$ offers a loss in-between that of KromHC and \mHC-lite or its $s\geq 2$ counterparts.}
    \label{fig:loss_trajectory}
\end{figure}

Figure~\ref{fig:loss_trajectory} showcases a representative sample of training loss
curves exhibited by \mHC-lite, KromHC and go-\mHC, illustrative of the performance trends observed in general. go-\mHC\ generally (1) reaches the lowest observed losses, and (2) begins descending and reaches this loss almost $10$ times faster than previous methods. We also see that due to the symmetry imposed by KromHC, it consistently sees higher errors in our toy models for $d>2$. Section \ref{app:symmetry_breaking} numerically investigates additional symmetry breaking terms from read/write projections and showcases that our conclusions are invariant to their inclusion.

\begin{figure}[t]
    \includegraphics[width=\linewidth]{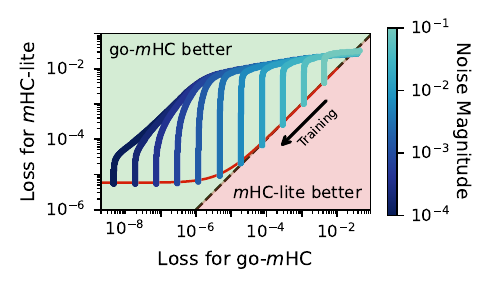}
    \caption{Phase-space comparison of training trajectories under varying noise regimes using the toy model in appendix~\ref{app:learned_dist_model}. We plot the MSE of \mHC-lite against go-\mHC\ across four orders of magnitude of noise magnitude (color-coded). Each curve represents a training trajectory starting from the top-right (high loss) and progressing toward the bottom-left (convergence). The dashed diagonal ($y=x$) separates regions of relative performance; trajectories residing in the upper-left green shaded region indicate that go-\mHC\ achieves a lower loss than \mHC-lite at equivalent training stages. As noise magnitude decreases, go-\mHC\ maintains a consistent performance advantage, converging to the optimal loss floor significantly faster than the lite variant. See figure~\ref{fig:convergence_comparison_noise_dependence} for the individual loss curves.}
    \label{fig:convergence_MHC_lite_w_adam}
\end{figure}

We additionally see this trend in figure~\ref{fig:convergence_MHC_lite_w_adam}, where throughout training, go-\mHC\ consistently has better loss regardless of the level of noise added to features. Figure~\ref{fig:loss_vs_mat_size} shows that this trend is sustained over various number of streams $d$ and across multiple epochs. More ablations, including additional optimizer configurations, parameter sweeps, and models of latent data \& target transformations are offered in appendix~\ref{app:sensitivity_main}.

\begin{figure}[t]
    \centering
    \includegraphics[width=\linewidth]{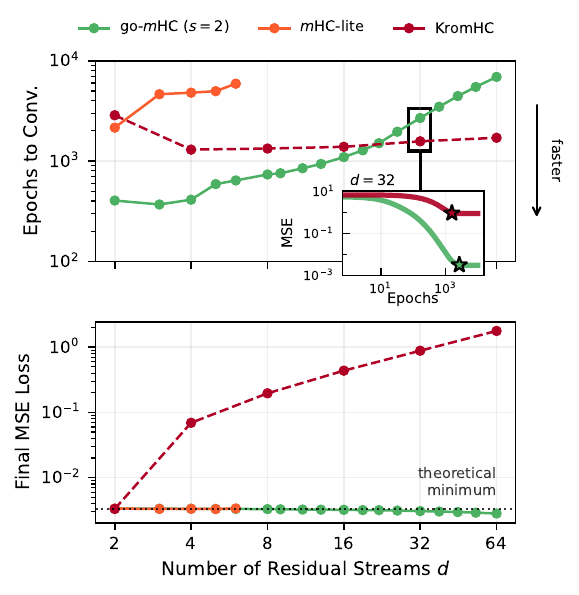}
    \caption{Comparison of convergence speed and final loss across different \mHC\ architectures as a function of the number of residual streams $d$.
    \textbf{Top Panel}: Convergence efficiency measured by Epochs to Convergence across varying $d$. While KromHC (red, dashed) show relatively stable or slightly increasing epoch counts for small $d$, go-\mHC\ (green) exhibits a steady increase in training time as the number of residual streams grows. An inset displays the loss trajectory for $d = 32$, highlighting that even though go-\mHC\ takes longer, it achieves a significantly lower error floor than KromHC despite the longer convergence time. We did not generate \mHC-lite data points beyond $d=7$ due to its factorial scaling, but suspect it follows a similar trend to that of go-\mHC.
    \textbf{Bottom Panel}: Scalability of the Final MSE Loss. Both go-\mHC\ and \mHC-lite consistently reach the theoretical minimum (dotted line) across the tested range. In contrast, KromHC suffers from severe performance degradation as $d$ increases, with the final loss diverging significantly from the optimal baseline, we find that it scales as $\cO(d)$ regardless of $i_k$ in appendix~\ref{app:kron_kfold_2_app_err}.}
    \label{fig:loss_vs_mat_size}
\end{figure}

We note a potential symmetry-breaking effect introduced by the projection $\mathcal{H}^{\text{pre}}$ as it interfaces with the computational layers $\mathcal{F}$. However, as detailed in appendix~\ref{app:symmetry_breaking}, go-\mHC’s improved expressivity and convergence remain robust to these interface terms.

\subsection{Validating with Tiny Language Model}
\label{sec:experiments}

We validate our observations in a tiny language models by comparing the convergence of
go-\mHC\ with that of \mHC-lite and KromHC in a 30M parameter model with $6$ layers and $6$ heads in nanoGPT \cite{nanogpt, tinystories}. We demonstrated similar performance using any of \mHC-lite, KromHC and go-\mHC\ on TinyStories and Character-level datasets, shown in figure~\ref{fig:validation_tiny_lm}. We also ran our method with $d=8$ (as shown in \ref{fig:training_trace_d8}), a stream size that would be prohibitively large to run with \mHC-lite due to the factorial scaling of its parameter count and complexity as shown in figure~\ref{fig:param_count}. This validates our method in principle, and demonstrates it's potential scalability over \mHC-lite, but requires additional verification in larger models to more rigourously evaluate and compare the methods. 

\begin{figure}
    \centering
    \includegraphics[width=\linewidth]{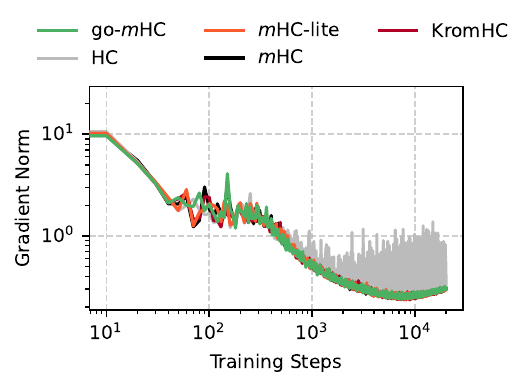}
    \caption{Gradient norm of different Hyper-Connection (HC) variants. We show the global gradient norm for a 30M parameter GPT-style transformer, trained on TinyStories. Our training runs indicate that all \mHC\ variants exhibit similarly convergent behavior with well-defined gradients, with a substantial improvement over HC, which suffers from unstable gradients. The training loss values remain within a negligible margin of one another (appendix~\ref{app:tiny_lm_training}).}
    \label{fig:validation_tiny_lm}
\end{figure}

We evaluate go-\mHC\ against baselines using the LLM-as-a-Judge framework with GPT-4.1, to grade the grammar, creativity, and consistency of stories generated by the models. Our evaluation results are listed in table~\ref{tab:reg_70}. We generate $70$ stories from each model with consensus of $3$ independent trials. We use the independent trials to calculate the intraclass correlation coefficient, where we see fairly good reliability. We tested smaller variants of GPT-4.1 and inter-model reliability in appendix~\ref{app:judges}.

Our method achieves parity with established baselines, suggesting our method scales from our toy model as intended. Due to compute limitations, we restrict our experiments to a tiny model, however, prior work \cite{zhou2026kromhc} suggests that model performance improves as the number of residual streams is scaled, so we instead attempt to demonstrate that our model achieves equivalent performance to prior work without the ability to claim advantage outside of the small model scenario. We suspect that the ability of go-\mHC\ to learn the full polytope would be more advantageous in the limit of deeper models that require multi-step reasoning with indirect composition \cite{lindsey2025biology} where computation may be more distributed across layers.
\input{table_llm_judge}

\section{Discussion}

In this work, we introduced an exact parameterization of the Birkhoff polytope using generalized orthostochastic matrices and demonstrated its application to manifold-constrained hyper-connections. By leveraging the algebraic properties of generalized orthostochastic matrices, go-\mHC\ addresses a long-standing trade-off between computational scalability and expressive reach in parameterizing doubly stochastic matrices. Our spectral analysis and loss benchmarks demonstrate that go-\mHC\ significantly outperforms existing exact methods like KromHC by filling the Karpelevi\v{c} region more substantially. This allows the model to capture a broader range of mixing dynamics and maintain expressivity as the residual stream is scaled. Furthermore, our empirical results on synthetic tasks---showing up to a $10\times$ speedup in convergence---suggest that the orthogonal matrix parameterized path within $\mathsf{B}_d$ provides a more favorable optimization landscape than the convex combinations used in \mHC-lite and KromHC. This efficiency, combined with $\cO(d^3)$ scaling, provides a viable technical path for increasing the number of residual streams as a new scaling dimension for deep networks.

Despite these advantages, our approach has several limitations. Our experiments were primarily conducted with synthetic tasks and scale-limited language models. While these results establish the method's correctness and relative advantages, the degree to which these convergence benefits translate to real-world performance remains an open question. Additionally, while $\cO(d^3)$ complexity is a significant improvement over \mHC-lite’s factorial scaling, it introduces a nontrivial overhead compared to KromHC’s $\cO(d^2)$ cost for large $d$. Using go-\mHC\ in tandem with Kronecker-factored methods remains a promising strategy for bridging this complexity gap while maintaining expressivity. Finally, the dependence on the Cayley transform introduces a matrix inversion bottleneck; future work could explore alternative mappings to the orthogonal group, such as Householder reflections or Hurwitz angles, to further optimize hardware-level efficiency.

\section*{Author Contributions}

TD and SDG jointly developed the codebase, executed the large-scale simulations, and performed all toy model training, including hyperparameter optimization and sensitivity analysis. The manuscript was written collaboratively, with both authors contributing equally to the text, figures, and the final synthesis of results.

TD conceptualized the research and established the theoretical framework, including the formulation of toy models and formal analysis. Specifically, TD provided the mathematical derivations for parameter and computational complexity, manifold projections, spectra calculations, and training dynamics. The training and analysis of the small language models was also done by TD.

\section*{Acknowledgments}

The authors would like to thank Ayodeji Lindblad for insights that helped our understanding of the geometry and shadows of the Birkhoff polytope in higher dimensions.

We would also like to express our appreciation to the authors of the {\mHC} paper; their insightful work was the primary catalyst for our interest in manifold-constraints and directly motivated the research questions explored in this study. Additionally, we thank the maintainers of {nanoGPT} and the authors of {\mHC-lite} and {KromHC} for publishing the codebase that helped facilitate our verification experiments in tiny language models.

\section*{Reproducibility}
\label{sec:reproducibility}
To ensure reproducibility, we provide the complete source code, pre-trained models, and instructions to reproduce the main experimental results in our GitHub repository: \url{https://github.com/itstorque/go-mHC}.

\clearpage

\bibliographystyle{plainnat} 
\bibliography{refs}
\clearpage
\appendix
\ifshowcontent

    \section*{Appendix}
    \vskip 1em
    
    \begin{enumerate}[label=\textbf{\Alph*.}, leftmargin=0cm, itemsep=2ex]
    
        \item \textbf{Sensitivity Analysis and Model Variations} \dotfill \pageref{app:sensitivity_main}
        \begin{enumerate}[label=\Alph{enumi}.\arabic*, leftmargin=0cm, labelsep=5pt]
            \item Number of Streams \dotfill \pageref{app:loss_curve_vs_d}
            \item Sparsity \dotfill \pageref{app:sparsity}
            \item SGD instead of Adam \dotfill \pageref{app:convergence_MHC_lite_w_sgd}
            \item Alternative Sampling Distributions for Doubly Stochastic Matrices \dotfill \pageref{app:target_perms}
            \item Other Toy Models \dotfill \pageref{app:other_toy_models}
            \begin{enumerate}[label=\Alph{enumi}.\arabic{enumii}.\arabic*, leftmargin=0.27cm, labelsep=5pt]
                \item \textit{Model for numerically calculating the spectra} \dotfill \pageref{app:spectra_model}
                \item \textit{Learned matrix distance} \dotfill \pageref{app:learned_dist_model}
            \end{enumerate}
        \end{enumerate}
    
        \item \textbf{Theoretical Derivations and Numerical Analysis} \dotfill \pageref{app:theory_main}
        \begin{enumerate}[label=\Alph{enumi}.\arabic*, leftmargin=0cm, labelsep=5pt]
            \item Derivation of the Theoretical MSE Lower Bound \dotfill \pageref{app:mse_derivation}
            \item Impact of Projector Noise \dotfill \pageref{app:projector_noise_impact}
            \item Spectral Equivalence of $k$-fold and $1$-fold Kronecker \dotfill \pageref{app:kron_kfold_2}
            \begin{enumerate}[label=\Alph{enumi}.\arabic{enumii}.\arabic*, leftmargin=0.27cm, labelsep=5pt]
                \item \textit{Proof of $\Spec(\mathbf{A}_j) \subseteq \Spec(\bigotimes \mathbf{A}_i)$}\dotfill \pageref{app:kron_kfold_2_proof}
                \item \textit{Effect of $d$ on approximation error}\dotfill \pageref{app:kron_kfold_2_app_err}
            \end{enumerate}
            \item Residual Stream Constraints \dotfill \pageref{app:res_constraints}
            \item Depth Decoupling in Random Doubly Stochastic Mixing Terms \dotfill \pageref{app:depth_decoupling}
            \item Approximate FLOP Expressions \dotfill \pageref{app:complexity_full}
            \item Proofs of Coverage, Path Connectedness and Gradient Stability \dotfill \pageref{app:proof_of_connectedness}
            \item Impact of Spectral Radius on Gradient Stability \dotfill \pageref{app:gradient_stability_from_spectral_radius}
            \item Analyzing Convergence in Sinkhorn-Knopp \dotfill \pageref{app:sk_conv}
            \item Gradient Vanishing and Crosstalk in \mHC-lite \dotfill \pageref{app:mhclite_vanishing_gradients}
        \end{enumerate}
    
        \item \textbf{Symmetry Breaking and Residual Stream Projections} \dotfill \pageref{app:symmetry_breaking}
    
        \item \textbf{Geometry of $s$-Orthostochastic Matrices} \dotfill \pageref{app:geometry}
    
        \item \textbf{The Orthostochastic--Unistochastic Gap} \dotfill \pageref{app:stochastic_gap}
    
        \item \textbf{Other Performance \& Distance Metrics} \dotfill \pageref{app:metrics_main}
        \begin{enumerate}[label=\Alph{enumi}.\arabic*, leftmargin=0cm, labelsep=5pt]
            \item The Geodesic Distance \dotfill \pageref{app:geodesic}
            \item KL Divergence \dotfill \pageref{app:kl_divergence}
            \item Time to Convergence \dotfill \pageref{app:time_conv}
        \end{enumerate}

        \item \textbf{30M Parameter Language Model} \dotfill \pageref{app:tiny_lm}
        \begin{enumerate}[label=\Alph{enumi}.\arabic*, leftmargin=0cm, labelsep=5pt]
            \item Hyperparameters \dotfill \pageref{app:tiny_lm_hyper}
            \item Loss Curves \dotfill \pageref{app:tiny_lm_training}
            \item LLM-as-a-Judge Results for GPT-4.1 Mini and Nano \dotfill \pageref{app:judges}
            \item Total Number of Parameters \dotfill \pageref{app:tiny_lm_parameters}
        \end{enumerate}

        \item \textbf{Weighted-Average of Doubly Stochastic Matrices} \dotfill \pageref{app:alt_form}
        \begin{enumerate}[label=\Alph{enumi}.\arabic*, leftmargin=0cm, labelsep=5pt]
            \item Application to Orthostochastic\&Unistochastic Matrices \dotfill \pageref{app:alt_form_weight_avg_ortho}
            \item Application to KromHC \dotfill \pageref{app:alt_form_weight_avg_krom}
        \end{enumerate}
    
    \end{enumerate}
    
    \vskip 1em
    \newpage

    \renewcommand\thefigure{A\arabic{figure}}
    \setcounter{figure}{0}
    
    \section{Sensitivity Analysis and Model Variations}
    \label{app:sensitivity_main}
    
    \subsection{Number of Streams}
    \label{app:loss_curve_vs_d}
    
    We show that our method consistently outperforms our baselines for $d=2$ to $d=10$ by probing averaged training loss curves in figure~\ref{appfig:num_res_stream}. 
    We see that as $d$
    increases, the starting loss increases monotonically with $d$ for all go-\mHC\ and KromHC instantiations, while staying constant in \mHC-lite likely.
    We attribute this to the initialization where in \mHC-lite we parameterize closer to the barycenter $\frac{1}{d}\mathbf{J}_d$ due to the vanishing gradients at
    the polytope corners as shown in appendix \ref{app:mhclite_vanishing_gradients}, which in general is closer to a random matrix sampled from within the polytope.
    
    go-\mHC\ maintains the $\approx \times10$ speedup comapred to our baselines for the same learning rate across all values of $d$. We also see that go-\mHC\ ($s\geq 2$) and \mHC-lite both converge to the theoretical optimal (dashed blue line in figure \ref{appfig:num_res_stream}) calculated in appendix \ref{app:mse_derivation}.
    
    \subsection{Sparsity}
    \label{app:sparsity}
    
    We primarily investigated the dense regime (with sparsity $S=0$ as defined in Elhage et al.~
    \cite{elhage2022superposition}) in the main text. We use this appendix to further show that
    our toy model results remain robust when modeling the input distribution $\mathcal{X}$ to be sparse. We plot sparsity values of $S\in [0, 0.5, 0.99]$ for $d\in \{2,\dots, 10\}$ in figure~\ref{appfig:sparsity}.
    
    This allows us to apply our manifold constraints to a wide variety of networks regardless of
    the sparsity regime. We believe that this indicates that this method should work regardless of our exact interpretation of
    the mechanism of embedding of features in the residual stream given our lack of a complete mechanistic model of the residual stream \cite{elhage2023basis, brown2023privilegedconvergentbasesneural}.
    
    \subsection{SGD instead of Adam}
    \label{app:convergence_MHC_lite_w_sgd}

    While Adam's adaptive moments tend to produce more efficient convergence, we test the effect of using SGD instead to verify the robustness of our findings. We postulate that this ablation is necessary due to the nontrivial difference in the gradient landscape between our method and baseline. We show that figure~\ref{appfig:convergence_MHC_lite_w_sgd} maintains similar performance as that of figure~\ref{fig:convergence_MHC_lite_w_adam} with Adam. We conclude that the choice of static vs adaptive optimizer should have minimal impact on our parametrization's robustness.
    
    \begin{figure}[ht]
        \centering
        \includegraphics[width=\linewidth]{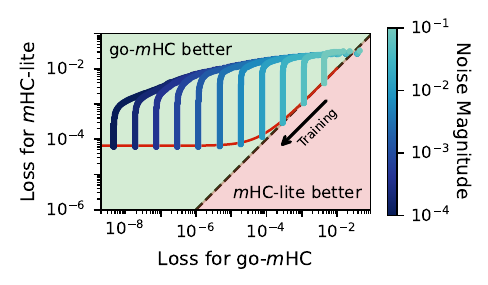}
        \caption{Ablation of figure~\ref{fig:convergence_MHC_lite_w_adam} with SGD instead of Adam. We used the same hyperparameters and only changed $\eta=0.1$. Our method remains faster at converging with a different non-momentum based optimizer.}
        \label{appfig:convergence_MHC_lite_w_sgd}
    \end{figure}
    
    \subsection{Alternative Sampling Distributions for Doubly Stochastic Matrices}
    \label{app:target_perms}

To further evaluate the robustness of our experiments, we further test the effect of the sampling distribution of target doubly stochastic matrices $\mathbf{M} \in \mathbb{R}^{d \times d}$. We implement the following four sampling techniques:

\begin{enumerate}
    \item \textbf{Sinkhorn-Knopp (SK) Normalization:} We sample an initial matrix $\mathbf{X}$ where each entry $\mathbf{X}_{ij} \sim \text{Uniform}(0, 1)$ and project the matrix using the Sinkhorn-Knopp algorithm, denoted as $\cP_\text{SK}(\mathbf{X})$. This procedure iteratively normalizes rows and columns to converge toward a doubly stochastic matrix.
    
    \item \textbf{Haar-Orthogonal Mapping:} We generate a Haar-random orthogonal matrix $\mathbf{Q} \in \mathbb{R}^{sd \times sd}$ and apply our construction from the paper to generate a random matrix. Specifically, we construct $\mathbf{M}$ as:
    \begin{equation}
        \mathbf{M}_{ij} = \frac{1}{s} \sum_{k=1}^{s} \sum_{l=1}^{s} Q_{(i-1)s+k, (j-1)s+l}^2.
    \end{equation}
    
    \item \textbf{Haar-Unitary Mapping:} Analogous to the orthogonal case, we sample a Haar-random unitary matrix $\mathbf{U} \in \mathbb{C}^{sd \times sd}$. The entries are similarly defined as $M_{ij} = \frac{1}{s} \|\mathbf{U}_{B_{i,j}}\|_F^2$, where $\mathbf{U}_{B_{i,j}}$ represents the $(i, j)$-th sub-block of size $s \times s$. This has additional phases that challenge our special orthogonal representation that is insensitive to phases.
    
    \item \textbf{Birkhoff-von Neumann Convex Combination:} Following the Birkhoff-von Neumann theorem, we can describe the set of doubly stochastic matrices as the convex hull of the set of $d!$ permutation matrices $\{P_p\}_{p=1}^{d!}$ and we can generate $M$ as:
    \begin{equation}
        M = \sum_{p=1}^{d!} \theta_p P_p,
    \end{equation}
    where the prefactors $\theta$ are sampled from a symmetric Dirichlet distribution ($\alpha=1$), representing a uniform distribution over the $(d!-1)$-simplex.
\end{enumerate}

We use method 1 in all figures in the text, unless otherwise specified.

\begin{figure}[ht]
    \centering
    \includegraphics[width=\linewidth]{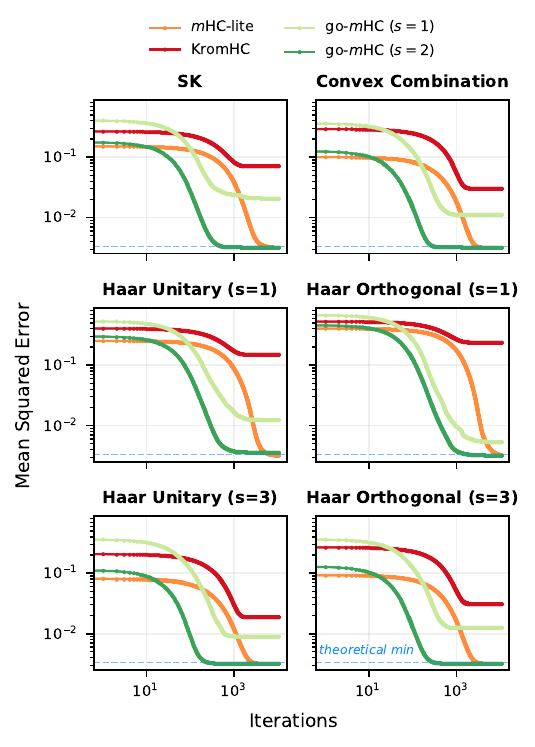}
    \caption{Loss curves for our methods and baselines for a target matrices $\{\mathbf{T_i}\}$ generated using 6 different parameterizations. We see that regardless of the distribution, our method maintains it's advantage -- in convergence and expressivity.}
    \label{fig:different_target_sampling_methods}
\end{figure}

We see in figure~\ref{fig:different_target_sampling_methods} that regardless of the method chosen to sample target doubly stochastic matrices, our method maintains the advantages discussed in the main text.

    \subsection{Other Toy Models}
    \label{app:other_toy_models}

    We discuss two substantial toy models in section~\ref{app:projector_noise_impact} and section~\ref{app:symmetry_breaking}, respectively, the model for adding projector noise and for including
    read and write terms ($\cH^{\text{pre}}$ and $\cH^{\text{post}}$) that can break stream symmetry. In this section however, we discuss two specific models, the one used in computing figures~\ref{fig:spectra_numerics}~\&~ \ref{fig:all_spectra} and an alternative formulation of our model that minimizes the distance between the learned and target matrices via the $d^2$-entrywise MSE.
    
    \subsubsection{Model for numerically calculating the spectra}
    \label{app:spectra_model}
    
    In figure~\ref{fig:all_spectra}, we aim to explore the size of the Karpelevi\v{c} region we can access using some method $\mathsf{M}$. Since we don't want to rely on $\mathsf{M}$'s initialization distribution or define a measure on the space of $\mathsf{M}$'s manifold, we opt to instead modify our toy model's objective to minimize the below loss function:
    \begin{equation*}
        \mathcal{L}_i= \min_{\lambda \in \Spec(\mathcal{H}^{\text{res}}_i)} (\lambda - e_i)^2
    \end{equation*}
    
    where $\mathcal{H}^{\text{res}}_i$ is computed and optimized using $\mathsf{M}(\cdots)$ and $e_i \in \mathbb{C}$ is a target eigenvalue sampled uniformly from the complex square $\{e_t\in \mathbb{C} \;\vert\; \left|\text{Re}(e_i)\right|\leq 1, \left|\text{Im}(e_i)\right| \leq 1 \}$.
    
    This allows us to probe the reachability of different regions in the Birkhoff polytope using a surrogate metric that post-selects on target eigenvalues. We sample $N=10^{5}$ random target eigenvalues and plot a histogram of all $\mathsf{M}$-reachable eigenvalues: $\bigcup_{i=1}^N \Spec(\mathcal{H}^{\text{res}}_i)$.
    
    \subsubsection{Learned Matrix Distance}
    \label{app:learned_dist_model}

    \begin{figure}
        \centering
        \includegraphics[width=\linewidth]{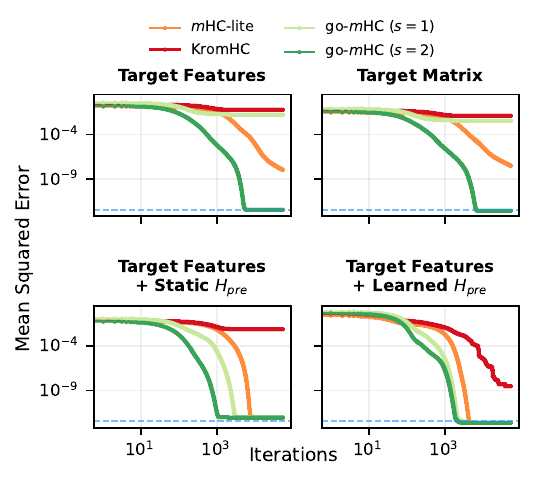}
        \caption{}
        \label{appfig:different_toy_models_methods}
    \end{figure}
    
    We also investigated using a target matrix instead of target input features to $\mathcal{F}$,
    there we see the same overall behaviour. The objective function we used there is:
    $$
    \mathcal{L}=\sum_{ij} (\mathcal{H}^{\text{res}}_{ij}-\mathbf{B}_{ij})^2
    $$
    for a target doubly stochastic matrix $\mathbf{B}$.

    We are able to draw similar conclusions as in the main text using this method as demonstrated by the sample loss curve drawn in figure~\ref{appfig:different_toy_models_methods}(b). When analyzing \mHC-lite analytically in appendix~\ref{app:mhclite_vanishing_gradients}, we use this model due to it's ease of reparameterizing using the Birkhoff–von Neumann theorem.

    \begin{figure*}[ht]
        \centering
        \includegraphics[width=0.9\linewidth]{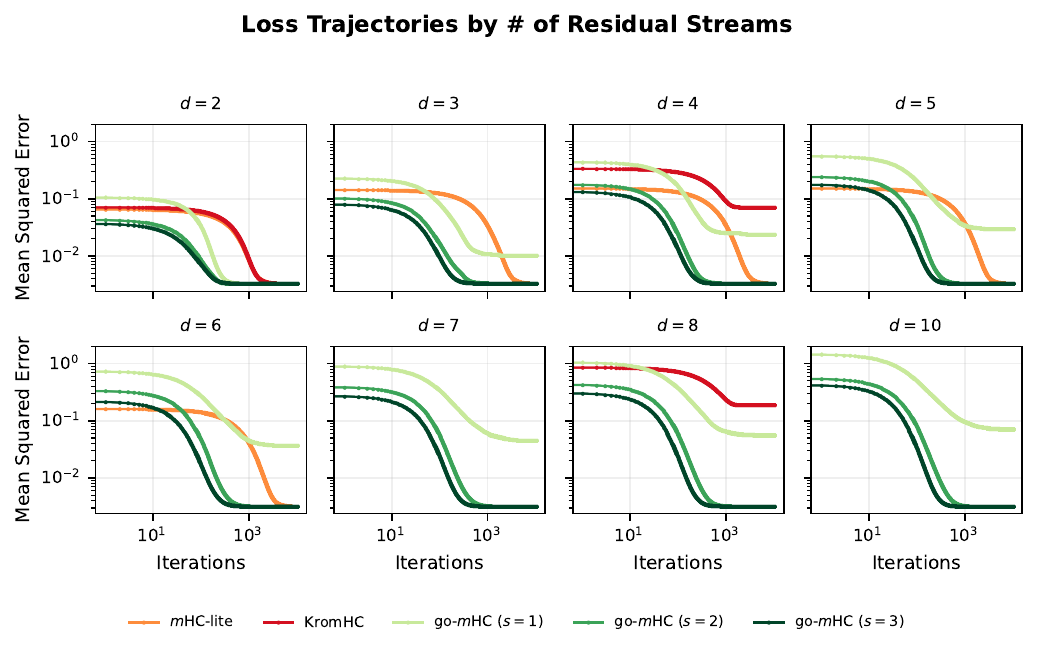}
        \caption{Convergence dynamics across the number of residual streams ($d$) and residual stream configurations ($s$). Each panel illustrates the training trajectory (MSE vs. Iterations) for dimensions $d \in [2, 10]$. We compare the baseline $m$HC-lite and KromHC against three configurations of go-$m$HC. While KromHC (red) and $m$HC-lite (orange) exhibit slower convergence and higher error floors as $d$ increases, go-$m$HC demonstrates robust scalability. Notably, increasing the number of residual streams ($s=2, 3$) in go-$m$HC accelerates convergence speed and improves final accuracy. We ran KromHC for powers of $2$ only, as this constitutes the regime where KromHC is fastest, if $i_k\neq2$, the performance interpolates to that of \mHC-lite in the limit. We don't run \mHC-lite for $d\geq 7$ due to the factorial scaling of the model.}
        \label{appfig:num_res_stream}
    \end{figure*}
    
    \begin{figure*}[p]
        \centering
        \includegraphics[width=0.8\linewidth]{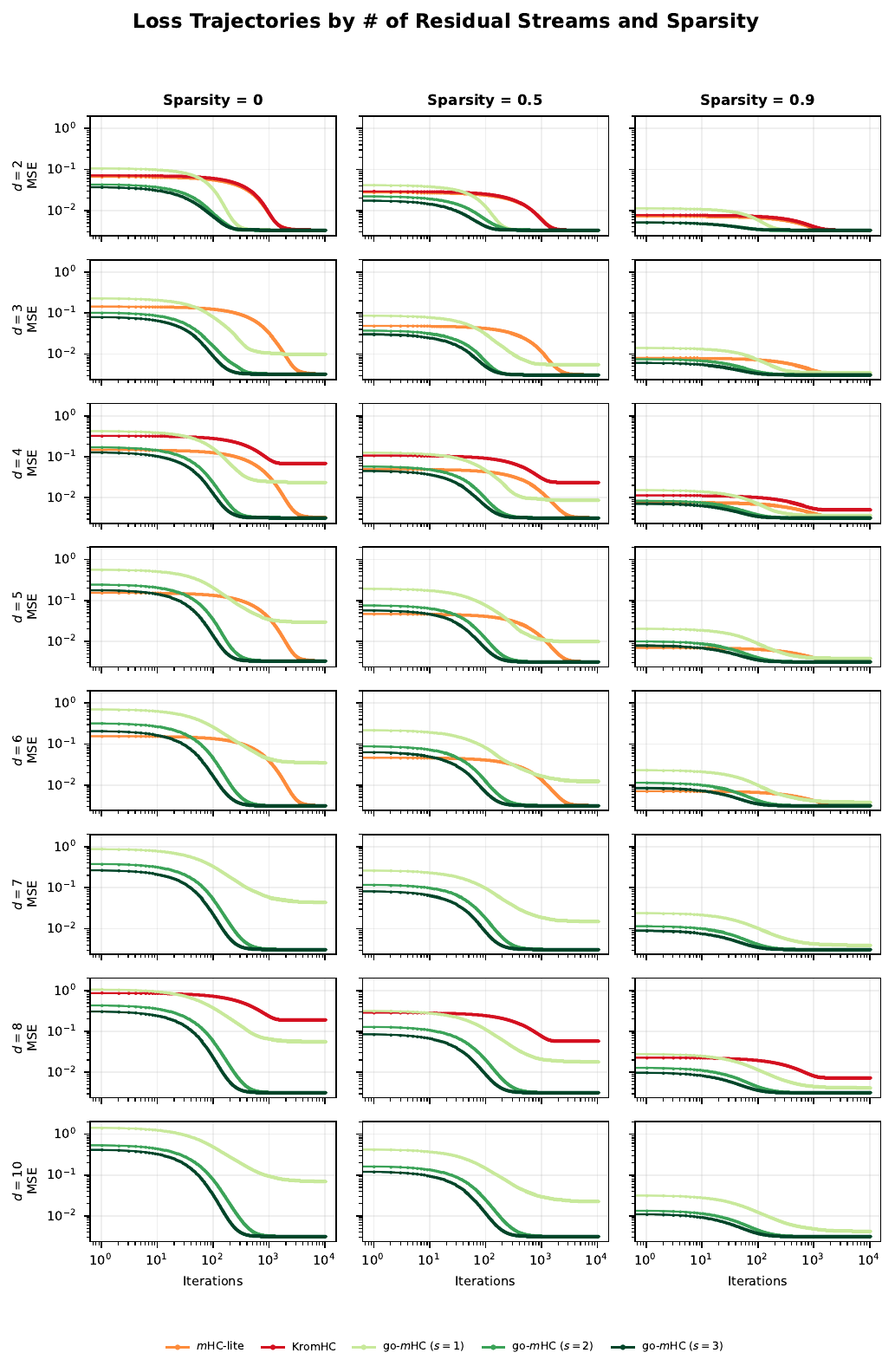}
        \caption{Similar to figure~\ref{appfig:num_res_stream}, sweeping sparsity $S\in\{0, 0.5, 0.9\}$. We see that sparsity doesn't change the overall trend, and only rescales loss due to the sparse vectors having smaller contributions.}
        \label{appfig:sparsity}
    \end{figure*}
    
    \newpage
    \section{Theoretical Derivations and Numerical Analysis}
    \label{app:theory_main}
    \subsection{Derivation of the Theoretical MSE Lower Bound}
    \label{app:mse_derivation}
    In this section, we derive the minimum achievable MSE when learning a doubly stochastic matrix $\textbf{W}$ to approximate a target $\textbf{T}$. Let $\epsilon \in \mathbb{R}_{>0}$ denote the noise magnitude, and let $\boldsymbol{\xi} \in \mathbb{R}^d$ be a random noise vector sampled such that each component $\xi_i \sim \mathcal{U}(0, \epsilon)$.
    
    The observed target is given by $\textbf{y} = \textbf{T}\textbf{v} + \boldsymbol{\xi}$. We define the loss function as the expected mean squared error over the data distribution:
    \begin{equation}
        L(W) = \mathbb{E}_{\textbf{v}, \boldsymbol{\xi}} \left[ \frac{1}{d} \| \textbf{T}\textbf{v} + \boldsymbol{\xi} - \textbf{W}\textbf{v} \|_2^2 \right]
    \end{equation}
    
    Assuming the model successfully converges to the target transformation ($\mathbf{W} = \mathbf{T}$), the structural error term vanishes. The residual loss is then dictated by the second raw moment of the noise distribution:
    \begin{align}
        L_{min} &= \frac{1}{d} \mathbb{E}_{\boldsymbol{\xi}} [ \| \boldsymbol{\xi} \|_2^2 ] \\
                &= \frac{1}{d} \sum_{i=1}^d \mathbb{E}[\xi_i^2]
    \end{align}
    
    For a variable $X$ distributed uniformly on $[0, \epsilon]$, the second moment is $\mathbb{E}[X^2] = \frac{\epsilon^2}{3}$. Substituting this into the summation:
    \begin{equation}
        \mathcal{L}_{\text{min}} = \frac{1}{d} \sum_{i=1}^d \frac{\epsilon^2}{3} = \frac{\epsilon^2}{3}
    \end{equation}
    
    This result demonstrates that the minimum MSE is independent of the dimension $d$ and is solely a function of the noise magnitude $\epsilon$. For our experimental setup where $\epsilon = 0.1$, the theoretical floor is $\mathcal{L}_{\text{min}} \approx 3.33 \times 10^{-3}$, which aligns with our empirical findings in figures \ref{fig:convergence_comparison_noise_dependence}, \ref{fig:loss_trajectory}, \ref{fig:convergence_MHC_lite_w_adam}, and \ref{appfig:num_res_stream}.

    \begin{figure}[t]
        \includegraphics[width=0.9\linewidth]{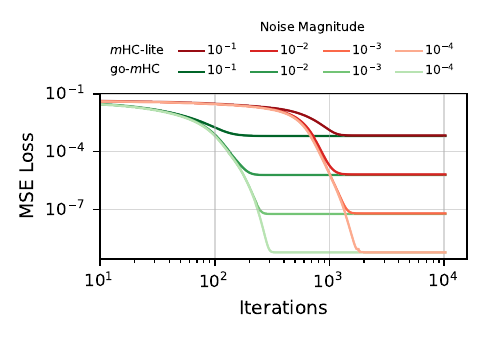}
        \caption{Averaged loss trajectories for \mHC-lite and go-\mHC\ for our toy model with different noise levels. They both achieve the same final loss, with go-\mHC consistently reaching that value $10\times$ faster for a fixed learning rate $\eta=0.001$ across noise levels.}
        \label{fig:convergence_comparison_noise_dependence}
    \end{figure}
    
    \subsection{Impact of Projector Noise}
    \label{app:projector_noise_impact}
    We further consider the case where the target transformation is subject to multiplicative noise applied on the target matrix. Let the target mixing signal have gradients that point in the direction of \mbox{$\tilde{\textbf{T}} = \textbf{T} + \boldsymbol{\Xi}$}, where $\boldsymbol{\Xi} \in \mathbb{R}^{d \times d}$ represents stochastic fluctuations on the projector. The resulting observation is $\textbf{y} = \tilde{\textbf{T}}\textbf{v} + \boldsymbol{\xi}$. 
    
    This noise is non-trivial since the $\textbf{W}$ exists within the manifold $\mathsf{B}_d$, while $\textbf{T}$ has no such restriction.
    Under the assumption $\textbf{W}=\textbf{T}$ (in the limit of $\boldsymbol{\Xi}$ not taking us out of $\mathsf{B}_d$), the minimum MSE decomposes into additive and multiplicative components:
    \begin{align}
        \mathcal{L}_{\text{min}} &= \frac{1}{d} \mathbb{E} [ \| \boldsymbol{\Xi} \textbf{v} + \boldsymbol{\xi} \|_2^2 ] \\
                &= \frac{1}{d} \left( \text{Tr}(\mathbb{E}[\boldsymbol{\Xi} \textbf{v} \textbf{v}^\top \boldsymbol{\Xi}^\top]) + \mathbb{E}[\|\boldsymbol{\xi}\|_2^2] \right)
    \end{align}

    If we assume the elements of $\boldsymbol{\Xi}$ are i.i.d. with variance $\sigma_p^2$ and $\textbf{v}$ is sampled from a distribution with $\mathbb{E}[ \textbf{v}\textbf{v}^\top] = \sigma_v^2 \mathbf{I}$, the multiplicative term simplifies to $d \sigma_p^2 \sigma_v^2$. We can remove the assumption of $\textbf{W}=\textbf{T}$ by introducing a structural error term that scales with $\sigma_v^2$ (a measure of distance away from the closest point in the Birkhoff Manifold. The total MSE floor becomes:
    \begin{equation}
    \label{appeq:projector_noise_impact}
        \mathcal{L}_{\text{min}} = d \sigma_p^2 \sigma_v^2 + \frac{\epsilon^2}{3} + \frac{1}{d}\sigma_v^2 \inf_{\mathbf{W}\in \mathsf{B}_d} \| \mathbf{W}-\mathbf{T} \|^2_F
    \end{equation}

    With the last term coming from the structural error comes from the projection of the target matrix noise onto the Birkhoff polytope.
    
    The result in equation~\ref{appeq:projector_noise_impact} suggests that if projector noise is present, the MSE floor will no longer be invariant to the dimension $d$, and will instead scale linearly with the input dimensionality. This is a similar scaling to the one found in appendix~\ref{app:kron_kfold_2_app_err}.

    Experiments in toy models reveal that we maintain the same behaviour in the limits of small and large $\sigma_p \in \{10^{-6}, 10^{-4}, 10^{-3}, 10^{-2}, 10^{-1}\}$ as shown in figure~\ref{fig:sigmap_dep}.

    \begin{figure*}
        \centering
        \includegraphics[width=\linewidth]{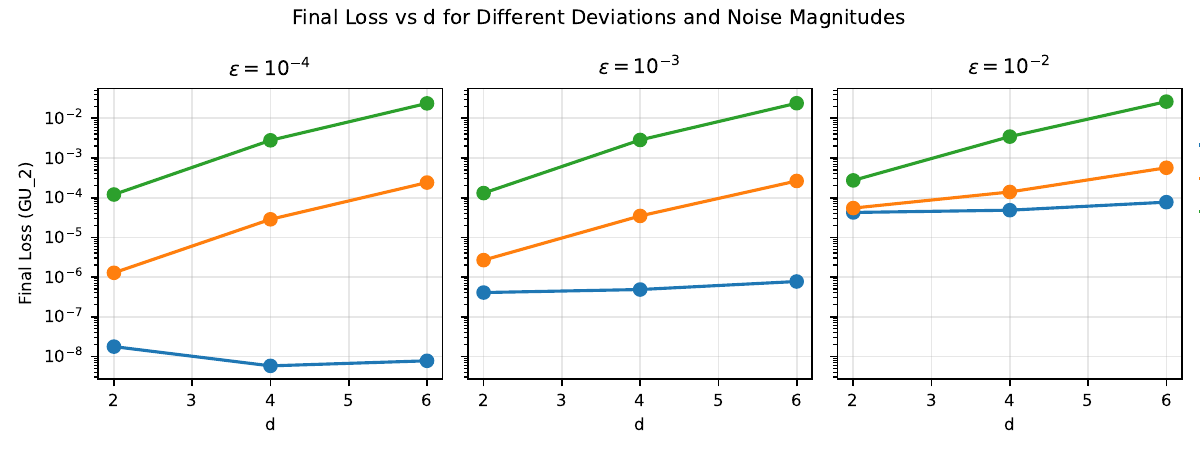}
        \caption{}
        \label{fig:sigmap_dep}
    \end{figure*}
    
    \subsection{Spectral Equivalence of $k$-fold and $1$-fold Kronecker Products}

    We show in this section that the mixing terms expressed by
    $k$-fold Kronecker Products of doubly stochastic matrices sampled from $\mathsf{B}_q$ cannot model dynamics outside of $\mathsf{B}_q$. We show a proof of this claim below and demonstrate emperically that the loss scales as $\cO(d)$ as $d>i_k$.
    
    \label{app:kron_kfold_2}
    \subsubsection{Proof of $\Spec(\mathbf{A}_j) \subseteq \Spec(\bigotimes \mathbf{A}_i)$}
    \label{app:kron_kfold_2_proof}
    
    \begin{proposition}
    The spectral space of a $k$-fold Kronecker product $\bigotimes_{i=1}^k \mathbf{A}_i$, where each $\mathbf{A}_i \in \mathsf{B}_q$, is identically the spectral space of $\mathsf{B}_q$.
    \end{proposition}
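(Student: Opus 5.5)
The plan is to identify the spectrum of a Kronecker product with products of the factors' eigenvalues, and reduce the proposition to a multiplicative-closure property of $\Spec(\mathsf{B}_q) := \bigcup_{\mathbf{M}\in\mathsf{B}_q}\Spec(\mathbf{M})$. I will argue the two inclusions separately.

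First I recall the standard fact that if $\mathbf{A}_i$ has eigenvalues $\lambda^{(i)}_1,\dots,\lambda^{(i)}_q$ (with multiplicity), then $\bigotimes_{i=1}^k \mathbf{A}_i$ has eigenvalues exactly $\prod_{i=1}^k \lambda^{(i)}_{j_i}$. This follows by Schur-triangularizing each $\mathbf{A}_i = \mathbf{U}_i\mathbf{T}_i\mathbf{U}_i^{*}$ and using $\bigotimes \mathbf{U}_i$, which is unitary, together with $\bigotimes \mathbf{T}_i$, which is upper triangular.

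\textbf{Inclusion $\supseteq$.} Take any $\lambda\in\Spec(\mathsf{B}_q)$, realized by some $\mathbf{A}_1\in\mathsf{B}_q$. Set $\mathbf{A}_2=\dots=\mathbf{A}_k=\mathbf{I}_q$, which lies in $\mathsf{B}_q$ and has only the eigenvalue $1$. Then $\lambda$ is an eigenvalue of the product. This is the title claim $\Spec(\mathbf{A}_j)\subseteq\Spec(\bigotimes\mathbf{A}_i)$.

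\textbf{Inclusion $\subseteq$.} By the first step, this reduces to showing that $\Spec(\mathsf{B}_q)$ is closed under multiplication; induction on $k$ then finishes the argument. For $q=2$, which is the KromHC setting $i_k=2$ used in the main text, this is immediate. Every $\mathbf{A}\in\mathsf{B}_2$ has the form $\begin{pmatrix} p & 1-p\\ 1-p & p\end{pmatrix}$, with spectrum $\{1,2p-1\}$, so $\Spec(\mathsf{B}_2)=[-1,1]$. This interval is closed under products, and the $4\times 4$ example $\{1,\lambda_A,\lambda_B,\lambda_A\lambda_B\}$ in the main text is the $k=2$ instance.

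For general $q$, I would use the description of the Karpelevi\v{c}-type region of $\mathsf{B}_q$ as a union of regular polygons $\Pi_m=\mathrm{conv}\{e^{2\pi i j/m}\}$ for $m\le q$, as in figure~\ref{fig:spectrum_illustration}. Closure within a single polygon is clean: $\Pi_m$ is the convex hull of the cyclic group $\mu_m$, so a product of two convex combinations of elements of $\mu_m$ is again such a combination. Hence $\Pi_m\cdot\Pi_m\subseteq\Pi_m$.

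\textbf{Main obstacle.} The hard step is the mixed products $\Pi_m\cdot\Pi_n$ with $m\neq n$. These land in $\mathrm{conv}\,\mu_{\mathrm{lcm}(m,n)}$, and that can exit the region. For $q=3$, the product $(-1)\cdot\omega=-\omega$ is a primitive 6th root of unity, and it lies outside $\Spec(\mathsf{B}_3)$. So the general-$q$ statement cannot hold as written. I would therefore state and prove it rigorously for $q=2$, which is the case KromHC uses. For general $q$, I would present the weaker conclusion that the reach stays inside $\bigcup_{m}\mathrm{conv}\,\mu_m$ with $m$ ranging over products of integers at most $q$. I would flag the general-$q$ equality as requiring this correction rather than try to force a proof of it.
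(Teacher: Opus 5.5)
Your proposal is correct. Where it departs from the paper, it is the paper's argument that fails.

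Your eigenvalue-product lemma and your $\supseteq$ direction match the paper's proof. The paper pads with the eigenvalue $1$ of each remaining factor; your choice $\mathbf{A}_i=\mathbf{I}_q$ does the same job.

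For $\subseteq$, the paper handles general $q$ by asserting that $\mathcal{K}_q$ is star-shaped about the origin and ``closed under multiplication for elements on the unit circle.'' It concludes that products of eigenvalues cannot leave the region. That premise is false for every $q\ge 3$. The unit-modulus points of the region are exactly the roots of unity of order at most $q$, and these are not closed under multiplication.

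Your witness is valid. Both factors of $P_{(12)}\otimes P_{(123)}$ lie in $\mathsf{B}_3$, and the product is the permutation matrix of a permutation of $[3]\times[3]$ with cycle type $(6,3)$. It therefore has the eigenvalue $-\omega=e^{-i\pi/3}$, which no $3\times 3$ stochastic matrix can have. More generally, a $(q-1)$-cycle and a $q$-cycle in $\mathsf{B}_q$ give an eigenvalue of order exactly $q(q-1)>q$.

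A clean way to package your analysis uses the fact that the factors can be chosen independently. The union of the spectra of all $k$-fold products is then exactly the product set $\{\lambda_1\cdots\lambda_k:\lambda_i\in\Spec(\mathsf{B}_q)\}$. So for $k\ge 2$ the proposition holds if and only if $\Spec(\mathsf{B}_q)$ is closed under multiplication. That is your $[-1,1]$ argument for $q=2$ and your counterexample for $q\ge 3$.

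Your restricted statement is the one that supports the paper's actual use case ($i_k=2$: real spectrum, hence no cycles of length $>2$). The paper's general-$q$ claim that the spectral reach is invariant in $k$ is false.

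One correction to your fallback for general $q$. The description $\Spec(\mathsf{B}_q)=\bigcup_{m\le q}\Pi_m$, which you take from the paper's figure, is the Perfect--Mirsky conjecture. It is proved only for $q\le 4$, and a $5\times 5$ counterexample is known (Mashreghi--Rivard). So your polygon bound, with $m$ ranging over products of integers at most $q$, is justified only for $q\le 4$. For larger $q$, state the exact product-set characterization above instead of a polygon bound.
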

    
    \begin{proof}
    Let $\Spec(\mathbf{A}_i)$ be the spectrum of $\mathbf{A}_i \in \mathsf{B}_q$. The spectrum of the $k$-fold Kronecker product is given by the set of all possible $k$-way products:
    \begin{equation}
        \Spec\left( \bigotimes_{i=1}^k \mathbf{A}_i \right) = \left\{ \prod_{i=1}^k \lambda_i : \lambda_i \in \Spec(\mathbf{A}_i) \right\}
    \end{equation}
    Since $1 \in \Spec(\mathbf{A}_i)$ for all $i$, we can choose $\lambda_j \in \Spec(\mathbf{A}_j)$ and set all other $\lambda_{i \neq j} = 1$, which recovers the original spectrum $\Spec(\mathbf{A}_j) \subseteq \Spec(\bigotimes \mathbf{A}_i)$. 
    
    Conversely, for any doubly stochastic matrix, the eigenvalues lie within the Karpelevi\v{c} region $\mathcal{K}_q$. Because $\mathcal{K}_q$ is star-shaped with respect to the origin and closed under multiplication for elements on the unit circle, the products $\prod \lambda_i$ do not expand the spectral boundary beyond $\mathcal{K}_q$ (all $\lambda_i\leq1$). Thus, the span of the spectra remains invariant under the $k$-fold Kronecker transformation.
    \end{proof}

    \subsubsection{Effect of $d$ on approximation error in KromHC}
    \label{app:kron_kfold_2_app_err}

    We show in figure figure~\ref{appfig:kromhc_error_vs_d} that regardless of choice of $i_k$, the MSE approximation error incurred by KromHC scales with $\cO(d)$. We also show that choosing higher $i_k$ doesn't necessarily lead to a better MSE when $i_k<d$ as shown in the examples of $i_k=2$ and $i_k=4$ for $d=16$.

    We note that this scaling is similar to that found in appendix~\ref{app:projector_noise_impact}. This is in part due to being able to rewrite the deviation of our target matrix in $\mathbf{B}_d$ from it's optimal Kronecker factorized counterpart in $\{ \bigotimes_i B_i \vert B_i \in \mathbf{B}_q\}$ as noise on the target matrix $\mathbf{\Xi}$, with large variance -- this would also explain the same MSE scaling regardless of $i_k$ (independent of the $q$ in $\mathsf{B}_q$).

    \begin{figure}[t]
        \centering
        \includegraphics[width=\linewidth]{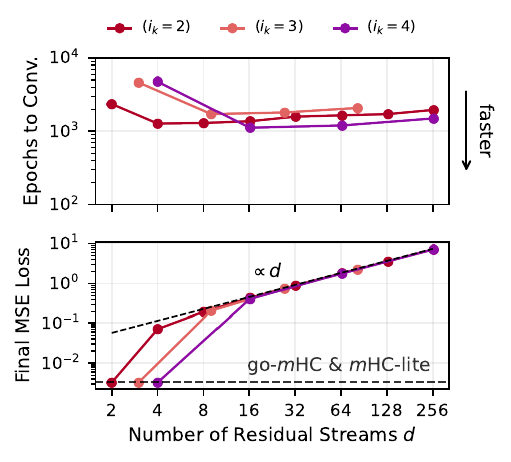}
        \caption{The minimum achievable error and convergence properties of KromHC as a function of $d$ and $i_k$. We find that the dynamics and results are mostly independent of choice of $i_k$. Namely, the approximation error of a matrix in $\mathsf{B}_d$ from a (product of) matrices in $\mathsf{B}_q$ with $q<d$ will have an MSE that scales with $d$, with little-to-no dependence on $q$ ($i_k$).}
        \label{appfig:kromhc_error_vs_d}
    \end{figure}
    
    \subsection{Residual Stream Constraints}
    \label{app:res_constraints}
    
    Unrolling equation \eqref{eq:hc} between two layers $l$ to $L$ yields:
    \begin{align}
    \medmath{
      \vx_L}&\medmath{= \underbrace{\left(\prod_{i=1}^{L-l} \cH_{L-i}^{\text{res}}\right)}_{\cH_{l\to L}^{\text{res}}} \vx_l} \nonumber\\
      &\medmath{+ \sum_{i=l}^{L-1} \left(\prod_{j=1}^{L-1-i} \cH_{L-j}^{\text{res}}\right) (\cH_i^{\text{post}})^\top\, \cF(\cH_i^{\text{pre}}\,\vx_i, \mathcal{W}_i).
      }
      \label{eq:hc_unrolled}
    \end{align}
    
    The shortcut product $\cH_{l\to L}^{\text{res}}$ controls the effective gain of the residual path. The \emph{Amax Gain Magnitude} (AGM) \cite{xie2025mhc} quantifies worst-case amplification:
    \begin{align}
      \text{AGM}^{\text{fwd}}(\cH_{l\to L}^{\text{res}}) = \max_{i} \sum_j |(\cH_{l\to L}^{\text{res}})_{ij}|,\\
      \text{AGM}^{\text{bwd}}(\cH_{l\to L}^{\text{res}}) = \max_{j} \sum_i |(\cH_{l\to L}^{\text{res}})_{ij}|.
    \end{align}
    This metric is the proxy metric used in \cite{xie2025mhc} as a surrogate for stability \& double-stochasticity.
    
    In this paper, we chose to not rely on this metric due to the exact nature of our
    algorithm:
    the definition of doubly stochastic matrices implies $\max_{i} \sum_j |(\cH_{l}^{\text{res}})_{ij}|=\max_{j} \sum_i |(\cH_{l}^{\text{res}})_{ij}|=1$. Moreover, since $\bB_d$ is closed under matrix multiplication, the product $\cH_{l\to L}^{\text{res}}$ remains doubly stochastic regardless of depth ($L-l$). 
    Learning matrices from the Birkhoff polytope $\mathsf{B}_d$ is of interest here as it provides a
    closed structure with a guarantee that the shortcut path in the residual
    stream will have a constant AGM, and therefore, avoid vanishing and exploding gradients similar to the identity mapping in ResNets.
    
    Furthermore, in the limit of $L \to \infty$, if the
    sum $\sum_{i=1}^{L-l} 1-\lambda_2(\cH_{L-i}^{\text{res}}) = \infty$, then the product $\prod_{i=1}^{L-l} \cH_{L-i}^{\text{res}}$ will
    tend to the barycenter $\frac{1}{d}\mathbf{J}_d$, where $\mathbf{J}_d$ is the all-ones matrix \cite{schwarz1980infinite}. This limit can be viewed as the identity for a repetition code on the mean-centered stream representation -- on average, far away layers have little influence on which stream information is encoded into. We show this emperically in appendix~\ref{app:depth_decoupling} and discuss potential implications of this.
    
    \subsection{Depth Decoupling in Random Doubly Stochastic Mixing Terms}
    \label{app:depth_decoupling}
    
    Another, to our knowledge, unstated advantage of using doubly stochastic matrices
    is the decoupling of $\cH^{\text{res}}_l$ from $\cH^{\text{res}}_{L}$
    when $L\gg l$. We show that numerically in figure~\ref{fig:decoupling_lambda}.
    
    \begin{figure}[t]
        \centering
        \includegraphics[width=0.9\linewidth]{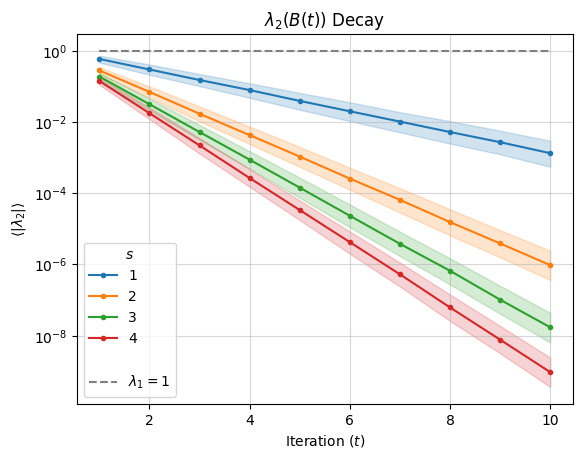}
        \caption{Random doubly stochastic matrice exhibit depth decoupling. Tracking the second eigenvalue, we see that the residual stream product tends towards the barycenter. Increasing $s$ allows for more flexibility of having less overlap of matrices. This is not surprising since we are sampling a sum of permutation matrices that are decoupled from each other.}
        \label{fig:decoupling_lambda}
    \end{figure}

    This is the same result from the previous section, where in effect, if all $\cH^{\text{res}}_i$ between $l$ and $L$ are learned independently (to some extent), the product tends to approximate the barycenter. This argument is the same as all eigenvalues $\lambda_i\neq 1$ tend to $0$, until we get to $\frac{1}{d}\mathbf{J}_d$. We think this implies that a model can simultaneously learn mixing matrices without having backward dependencies on what is in the stream, greatly simplifying the gradient lightcones (necessary gradient pathways). This would be interesting to investigate for parallelizing training and to better understand the mechanisms exhibited by structures that tend to the mean within a network.
    
    \subsection{Approximate FLOP Expressions}
    \label{app:complexity_full}
    
    We provide our explicit FLOP estimates in table~\ref{apptab:flops}. We used this to generate the asymptotics in table~\ref{tab:flop_complexity}.
    
    \begin{table}[!ht]
        \centering
        \caption{}
        \label{apptab:flops}
        \begin{tabular}{ll}
            \toprule
            \textbf{Method} & \textbf{Approximate FLOPs} \\
            \midrule
            \mHC            & $4 C S d^2$ \\
            go-\mHC\ ($s$)   & $\frac{8}{3} C (ds)^3 + 4C(ds)^2$ \\
            \mHC-lite       & $3C(d!) + 2Cd^2(d!)$ \\
            KromHC         & $3CK(i_k!) + 2CKi_k^2(i_k!) + C \left( \frac{i_k^{2K+2} - i_k^4}{i_k^2 - 1} \right)$ \\
            \bottomrule
        \end{tabular}
    \end{table}
    
    \subsection{Proofs of Coverage, Path Connectedness and Gradient Stability}
    \label{app:proof_of_connectedness}

The learnability of our proposed mapping is fundamentally governed by the geometry of the underlying loss landscape and the stability of the associated gradient signals. By employing parameterizations that ensure a smooth, well-conditioned objective function $\mathcal{L}(\theta)$, the optimization manifold avoids the sharp curvatures and isolated local minima that typically sequester gradient descent. This structural smoothness, characterized by a bounded Hessian $\nabla^2 \mathcal{L}(\theta)$ and a surjective mapping $\mathcal{P_\mathsf{M}}$ to the target matrix space, mitigates the vanishing gradient problem by maintaining a non-vanishing Jacobian $\partial \mathcal{P_\mathsf{M}} / \partial \theta$. We observe that our method offers a better parameterization compared to \mHC-lite within the polytope from toy models. In this appendix, we attempt to prove the coverage, path-connectedness, and gradient stability of our approach. We provide a formal sketch of one mechanism we encountered for vanishing gradients in \mHC-lite and a characterization of the associated cross-talk terms in appendix~\ref{app:mhclite_vanishing_gradients}.
    
    \paragraph{Analytical Gradients.} 
    We expect this space to be smoothly interpolatable within the polytope, i.e. the learned parameters $\mathbf{X}$ have a well defined gradient to smoothly learn $\mathbf{B}=\phi_{d,s}\left((\mathbf{I}-\mathbf{X})(\mathbf{I}+\mathbf{X})^{-1}\right)$.
    Given a loss $\mathcal{L}$, the gradient with respect to the orthogonal matrix $\mathbf{Q}$ follows:
    $$\mathbf{G}_Q = \frac{2}{s} \left(\frac{\partial \mathcal{L}}{\partial \mathbf{B}} \otimes \mathbf{1}_{s \times s}\right) \odot \mathbf{Q}$$
    
    Applying the chain rule through the Cayley transform $\mathcal{C}$ yields:
    \begin{equation}
        \nabla_{\mathbf{X}} \mathcal{L} = -2(\mathbf{I}+\mathbf{X})^{-\top} \mathbf{G}_Q (\mathbf{I}+\mathbf{X})^{-\top}
    \end{equation}
    The parameter gradient $\nabla_\theta \mathcal{L}$ is then the vectorization of the skew-symmetric projection $\text{proj}_{\mathfrak{so}(n)}(\nabla_{\mathbf{X}} \mathcal{L})$, where $\theta \in \mathbb{R}^{ds(ds-1)/2}$ parameterizes the skew-symmetric matrix $\mathbf{X} \in \mathfrak{so}(n)$ via the map $\mathcal{I}$.
    \newline
    \begin{lemma}
    The map $\cP_{\text{go}} = \Phi_{d,s} \circ \mathcal{C} \circ \mathcal{I}$ is $C^\infty$ (infinitely differentiable) over $\mathbb{R}^p$.
    \end{lemma}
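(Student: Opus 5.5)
The plan is to prove smoothness one factor at a time and then invoke the chain rule, since a composition of $C^\infty$ maps is $C^\infty$. Write $p = \frac{ds(ds-1)}{2}$ and $n = ds$. The composition is
$$\cP_{\text{go}} = \Phi_{d,s}\circ\mathcal{C}\circ\mathcal{I}:\ \mathbb{R}^p \to \mathfrak{so}(n) \to \cQ(n) \to \mathbb{R}^{d\times d}.$$

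First, $\mathcal{I}$ places the entries of $\theta$ in the strict upper triangle of an $n\times n$ matrix and their negatives in the lower triangle. This map is linear into the finite-dimensional space $\R^{n\times n}$, so it is $C^\infty$.

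Second, $\Phi_{d,s}$ from equation~\ref{eq:phi} sends $\mathbf{Q}$ to a matrix whose entries are $\frac1s$ times sums of squares of entries of $\mathbf{Q}$. Each coordinate is a polynomial in the $n^2$ entries, so $\Phi_{d,s}$ extends to a $C^\infty$ map on all of $\R^{n\times n}$. This lets me treat $\mathcal{C}$ as a map into the ambient space $\R^{n\times n}$, with no manifold structure on $\cQ(n)$ needed.

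The main step is the Cayley transform $\mathcal{C}(\mathbf{X}) = (\mathbf{I}-\mathbf{X})(\mathbf{I}+\mathbf{X})^{-1}$, where I must show that $\mathbf{I}+\mathbf{X}$ is invertible for every skew-symmetric $\mathbf{X}$.
\begin{itemize}
\item Suppose $(\mathbf{I}+\mathbf{X})v = 0$ for real $v$. Then $0 = v^\top(\mathbf{I}+\mathbf{X})v = \|v\|^2 + v^\top \mathbf{X} v$.
\item Skew-symmetry gives $v^\top\mathbf{X}v = 0$, so $\|v\|^2=0$ and $v=0$.
\item Equivalently, the eigenvalues of $\mathbf{X}$ are purely imaginary, so $\det(\mathbf{I}+\mathbf{X}) \neq 0$ on the whole domain.
\end{itemize}
By Cramer's rule, $(\mathbf{I}+\mathbf{X})^{-1} = \operatorname{adj}(\mathbf{I}+\mathbf{X})/\det(\mathbf{I}+\mathbf{X})$. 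The adjugate is polynomial in the entries and the determinant is a nowhere-vanishing polynomial, so every entry of the inverse is a rational function without poles on $\mathfrak{so}(n)$. Multiplying by the polynomial factor $\mathbf{I}-\mathbf{X}$ keeps this true, so $\mathcal{C}$ is $C^\infty$, indeed real-analytic.

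I also need $\mathcal{C}(\mathbf{X})$ to be orthogonal, so that $\Phi_{d,s}$ lands in $\bB_d$ by the Nechita et al.\ proposition. This follows because $\mathbf{I}-\mathbf{X}$ and $(\mathbf{I}+\mathbf{X})^{-1}$ commute, and $(\mathbf{I}-\mathbf{X})^\top=\mathbf{I}+\mathbf{X}$, which gives $\mathbf{Q}^\top\mathbf{Q}=\mathbf{I}$. Composing the three $C^\infty$ maps then gives the lemma on all of $\mathbb{R}^p$, with no singular set. As a consistency check, differentiating $(\mathbf{I}+\mathbf{X})^{-1}$ gives $-(\mathbf{I}+\mathbf{X})^{-1}\,d\mathbf{X}\,(\mathbf{I}+\mathbf{X})^{-1}$, which is well defined everywhere and matches the gradient formula stated above.

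The only genuine obstacle is the global invertibility of $\mathbf{I}+\mathbf{X}$; the quadratic-form argument above settles it.
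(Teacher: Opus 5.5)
Your proof is correct and follows essentially the same route as the paper's: $\mathcal{I}$ is linear, $\Phi_{d,s}$ is polynomial, and $\mathcal{C}$ is a rational map whose denominator $\det(\mathbf{I}+\mathbf{X})$ never vanishes on skew-symmetric $\mathbf{X}$. The paper gets nonvanishing from the purely imaginary eigenvalues (indeed $\det(\mathbf{I}+\mathbf{X})\geq 1$), while your quadratic-form identity $v^\top(\mathbf{I}+\mathbf{X})v=\|v\|^2$ plus Cramer's rule is an equally valid and slightly more explicit version; the orthogonality check is correct but not needed for the smoothness claim.
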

    \begin{proof}
    $\mathcal{I}$ is linear, and $\Phi_{d,s}$ is a quadratic polynomial, both of which are $C^\infty$. The Cayley transform $\mathcal{C}$ is a rational matrix function. Since $\mathbf{X}$ is skew-symmetric, its eigenvalues are purely imaginary and $(\mathbf{I}+\mathbf{X})$ is strictly non-singular ($\det(\mathbf{I}+\mathbf{X}) \geq 1$). As a result, $\mathcal{C}$ has no poles in $\mathbb{R}^{ds(ds-1)/2}$. The composition of these smooth maps ensures that for any $\theta_0$, there exists a neighborhood $U \subset \mathbb{R}^{ds(ds-1)/2}$ such that the image $\cP_{\text{go}}(U)$ is a continuous deformation in $\mathsf{B}_d$. This guarantees that gradient-based updates allow for stable, infinitesimal ``local movement'' across the bistochastic surface during training.
    \end{proof}
    
    We also know that the entire polytope is connected, and therefore we can learn any target $s$-orthostocastic matrix.
    \begin{proposition}
    The reachable set $\mathcal{R} = \{ \cP_{\text{go}}(\theta) \mid \theta \in \mathbb{R}^{ds(ds-1)/2} \}$ is path-connected and contains the barycenter $\frac{1}{d}\mathbf{J}_d \in \mathsf{B}_d$.
    \end{proposition}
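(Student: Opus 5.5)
Path-connectedness comes for free from the domain: the reachable set $\mathcal{R}$ is the image of the parameter space $\mathbb{R}^{ds(ds-1)/2}$ under $\cP_{\text{go}}$. The parameter space is path-connected (it is convex), and the preceding lemma shows that $\cP_{\text{go}} = \Phi_{d,s}\circ\mathcal{C}\circ\mathcal{I}$ is $C^\infty$, hence continuous. For $\mathbf{B}_0 = \cP_{\text{go}}(\theta_0)$ and $\mathbf{B}_1 = \cP_{\text{go}}(\theta_1)$, the curve $t \mapsto \cP_{\text{go}}((1-t)\theta_0 + t\theta_1)$ is a continuous path in $\mathcal{R}$ joining them. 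By the Proposition of Nechita et al.\ (Theorem 2.8), this path lies in $\bB_d$.

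The substantive part is exhibiting a parameter $\theta^\star$ with $\cP_{\text{go}}(\theta^\star) = \frac{1}{d}\mathbf{J}_d$. The route is to find an orthogonal matrix $\mathbf{Q}\in\cQ(ds)$ with $\det \mathbf{Q}=+1$ whose $s\times s$ blocks all satisfy $\|\mathbf{Q}_{ij}\|_F^2 = s/d$. One candidate is $\mathbf{Q} = \mathbf{H}\otimes \mathbf{I}_s$ with $\mathbf{H}$ a $d\times d$ orthogonal matrix all of whose entries have modulus $1/\sqrt{d}$; its blocks are $\mathbf{H}_{ij}\mathbf{I}_s$, with squared norm $s/d$. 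However, real Hadamard matrices do not exist for every $d$. A more robust choice is a real orthogonal matrix with flat block norms, for example the real form of the $d\times d$ Fourier matrix, which needs $s\ge 2$. In general one can pass through $\cQ(ds)\supseteq$ a real embedding of $\mathcal{U}(d)$, or simply note that the $s=1$ orthostochastic set contains the barycenter whenever a flat orthogonal matrix exists. To remain honest about edge cases, I would state the $s\ge 2$ construction using the real representation of the unitary DFT: $F_{jk} = e^{2\pi i jk/d}/\sqrt{d}$, with each complex entry mapped to the $2\times 2$ rotation block $\frac{1}{\sqrt d}\begin{pmatrix}\cos\phi & -\sin\phi\\ \sin\phi & \cos\phi\end{pmatrix}$. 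This gives an orthogonal $2d\times 2d$ matrix whose blocks have squared Frobenius norm $2/d$, so $\Phi_{d,2} = \frac{1}{d}\mathbf{J}_d$. For larger $s$, tensor with $\mathbf{I}_{s/2}$, or pad appropriately.

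Next, the determinant has to be adjusted to $+1$. If the constructed $\mathbf{Q}$ has $\det=-1$, multiply by a reflection acting within a single block row, e.g.\ negate one row. This leaves every $\|\mathbf{Q}_{ij}\|_F^2$ unchanged, consistent with the paper's remark that $\Phi_{d,s}$ forgets reflections.

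The last step, and the main obstacle, is that the Cayley transform $\mathcal{C}$ is not surjective onto $SO(ds)$: it misses every orthogonal matrix with $-1$ as an eigenvalue. We therefore need $\mathbf{Q}' = \mathbf{D}\mathbf{Q}$ with $\mathbf{D}$ diagonal $\pm1$ (again preserving block norms and keeping $\det=+1$) such that $\mathbf{I}+\mathbf{Q}'$ is invertible. My first attempt would be a genericity argument over sign matrices and signed permutations within blocks: the set of such $\mathbf{Q}'$ is finite and all have the same $\Phi_{d,s}$, and I would show that one avoids eigenvalue $-1$. Failing that, I would fall back on density and closure. The image of $\mathcal{C}$ is dense in $SO(ds)$ and $\Phi_{d,s}$ is continuous, so $\frac{1}{d}\mathbf{J}_d$ lies in the closure of $\mathcal{R}$, and $\mathcal{R}\cup\{\frac1d\mathbf{J}_d\}$ remains path-connected. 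Once $\mathbf{Q}'$ is found, $\theta^\star = \mathcal{I}^{-1}\big((\mathbf{I}-\mathbf{Q}')(\mathbf{I}+\mathbf{Q}')^{-1}\big)$, which is skew-symmetric, gives the required parameter.
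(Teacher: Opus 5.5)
Your path-connectedness argument is the paper's. For the barycenter your route differs from the paper's, and is better. The paper posits $\mathbf{Q}\in SO(ds)$ with all $|\mathbf{Q}_{uv}|=(ds)^{-1/2}$, i.e.\ a real Hadamard matrix of order $ds$, which does not exist for, e.g., $ds=6$. Via density of the Cayley image it then obtains only $\frac1d\mathbf{J}_d\in\text{cl}(\mathcal{R})$. Your real form of the DFT is correct for $s=2$: each block is $d^{-1/2}$ times a rotation, and $\det=|\det F|^2=1$. Tensoring with $\mathbf{I}_{s/2}$ then handles every even $s$. Still, the proposal does not prove the statement as written, for three reasons.

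\textbf{$s=1$ is false in general.} You are right to make $s=1$ conditional on a flat orthogonal matrix, but you should state the consequence: the claim fails there. For $d=3$, an orthogonal $\mathbf{O}$ with $\mathbf{O}_{ij}^2=\frac13$ would have two distinct rows whose inner product is $\frac13$ times a sum of three signs. That sum is odd, hence nonzero. Since $\mathcal{R}\subseteq\Phi_{3,1}(SO(3))$, which is compact, $\frac13\mathbf{J}_3$ is not even in $\text{cl}(\mathcal{R})$.

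\textbf{Odd $s\ge 3$ is not handled.} ``Pad appropriately'' is not an argument. Padding each $2\times2$ block with a $1\times1$ slot from some $\mathbf{O}\in O(d)$ gives $\frac23\cdot\frac1d\mathbf{J}_d+\frac13(\mathbf{O}_{ij}^2)_{i,j}$, which again requires a flat $d\times d$ orthogonal matrix. Some cases can be covered. Interleaving block coordinates shows that $\frac{a}{a+b}\Phi_{d,a}(\mathbf{Q}_a)+\frac{b}{a+b}\Phi_{d,b}(\mathbf{Q}_b)$ is $(a+b)$-orthostochastic for $\mathbf{Q}_a\in O(da)$, $\mathbf{Q}_b\in O(db)$. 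The permutation $(j,k)\mapsto(j+k \bmod d,\,k)$ of $[d]\times[d]$ gives the barycenter at $s=d$. Together these cover every $s$ in the semigroup generated by $2$ and $d$. A case such as $(d,s)=(5,3)$ is still not covered, so it must be argued separately or excluded.

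\textbf{The Cayley step is left open.} Your fallback yields only closure membership, which is all the paper proves. Also, $p\in\text{cl}(\mathcal{R})$ does not by itself make $\mathcal{R}\cup\{p\}$ path-connected (think of the topologist's sine curve). Your first idea, however, does close the gap:
\begin{itemize}
\item Let $f(x)=\det(\mathrm{diag}(x_1,\dots,x_n)+\mathbf{Q})$ with $n=ds$. It is affine in each $x_i$, so its average over $x\in\{\pm1\}^n$ equals $f(0)=\det\mathbf{Q}\neq0$.
\item Hence some sign matrix $\mathbf{D}$ has $\det(\mathbf{D}+\mathbf{Q})\neq0$, and $\mathbf{I}+\mathbf{D}\mathbf{Q}=\mathbf{D}(\mathbf{D}+\mathbf{Q})$ is invertible.
\item So $\mathbf{D}\mathbf{Q}$ has no eigenvalue $-1$. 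This forces $\det=+1$, which makes your determinant adjustment unnecessary. It also puts $\mathbf{D}\mathbf{Q}$ in the Cayley image with the same block norms as $\mathbf{Q}$.
\end{itemize}
For the DFT you can even be explicit: $F$ has eigenvalues in $\{\pm1,\pm i\}$, so the real form of $e^{i\pi/4}F$ avoids $-1$.

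With these repairs you get $\frac1d\mathbf{J}_d\in\mathcal{R}$ exactly for every even $s$ and every $s$ in the semigroup generated by $2$ and $d$. That is stronger than the paper's closure statement. For $s=1$ the claim must be restricted to those $d$ that admit a real Hadamard matrix.
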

    \begin{proof}
    The parameter space $\mathbb{R}^{ds(ds-1)/2}$ is a convex topological space. As shown previously, $\det(\mathbf{I}+\mathbf{X}) \neq 0$ for all skew-symmetric $\mathbf{X}$, ensuring $\cP_{\text{go}}$ is continuous on the entire domain. By the preservation of connectedness under continuous maps, $\mathcal{R}$ is path-connected. 
    
    To show $\frac{1}{d}\mathbf{J}_d \in \text{cl}(\mathcal{R})$, let $\mathbf{Q} \in SO(ds)$ be a matrix with uniform entries $|\mathbf{Q}_{uv}| = (ds)^{-1/2}$. This yields $\|\mathbf{Q}_{block_{ij}}\|_F^2 = s/d$ for every $s\times s $ blocks, and thus $\mathbf{B}_{ij} = 1/d$. Since the image of the Cayley transform is dense in $SO(n)$, there exists a sequence $\theta_k \to \theta^*$ such that $\Phi(\theta_k) \to \frac{1}{d}\mathbf{J}_d$. Thus, the optimization can reach the maximally entropic state from any initial configuration via a continuous path.
    \end{proof}

    \subsection{Impact of Spectral Radius on Gradient Stability}
    \label{app:gradient_stability_from_spectral_radius}
    We show in this section a connection between the spectral radius of our learned matrices $\rho(\cH^{\text{res}}_l)$ and  exploding or vanishing gradients during training.
    
    Consider a deep linear model with $n$ layers and error signal $\delta_n = \frac{\partial \mathcal{L}}{\partial \hat{y}}$. The gradient of the loss $\mathcal{L}$ with respect to an early weight matrix $\mathbf{H}^{\text{res}}_1$ is given by the chain rule as $\nabla_{\mathbf{H}^{\text{res}}_1} \mathcal{L} = \left( \prod_{i=n}^{2} ({\mathbf{H}^{\text{res}}_i})^\top \right) \delta_n x^\top$. For the homogeneous case where $\mathbf{H}^{\text{res}}_i = \mathbf{H}^{\text{res}}$, the magnitude of the gradient is governed by the power of the spectral radius $\rho(\mathbf{H}^{\text{res}}) = \max\; \Spec(\mathbf{H}^{\text{res}})$:
    \begin{equation}
    \left \|\nabla_{\mathbf{H}^{\text{res}}_1} \mathcal{L}\right\| \approx \left \|\left(\left(\mathbf{H}^{\text{res}}\right)^\top\right)^{n-1} \delta_n x^\top\right\| \leq C \cdot \rho\left(\mathbf{H}^{\text{res}}\right)^{n-1}
    \end{equation}
    where $C = \|\delta_n x^\top\|$. As the depth $n \to \infty$, the term $\rho(\mathbf{H}^{\text{res}})^{n-1}$ dictates the asymptotic behavior: if $\rho(W) > 1$, the gradient norm grows exponentially ($\|\nabla_{\mathbf{H}^{\text{res}}_1} \mathcal{L}\| \to \infty$), causing exploding gradients. Conversely, if $\rho(\mathbf{H}^{\text{res}}) < 1$, the gradient norm decays exponentially ($\|\nabla_{\mathbf{H}_1^{\text{res}}} \mathcal{L}\| \to 0$), resulting in vanishing gradients. 

    \subsection{Analyzing Convergence in Sinkhorn-Knopp}
    \label{app:sk_conv}

    We point to the absence of the softmax as one of the primary mechanisms our method achieves faster convergence in. We can test a similar mechanism by ablating the $\log$ mapping in $\cP_{\text{SK}}$ to make a linear version, where the algorithm is:
    \begin{algorithm}[tb]
       \caption{Sinkhorn Linear Forward Pass}
       \label{alg:sinkhorn}
    \begin{algorithmic}
       \STATE {\bfseries Input:} Weights $W \in \mathbb{R}^{d \times d}$, iterations $K$, $\epsilon$
       \STATE $M \leftarrow |W|$
       \FOR{$k=1$ {\bfseries to} $K$}
       \STATE $M_{ij} \leftarrow \frac{M_{ij}}{\sum_{k=1}^d M_{ik} + \epsilon}$ \COMMENT{Row Normalization}
       \STATE $M_{ij} \leftarrow \frac{M_{ij}}{\sum_{k=1}^d M_{kj} + \epsilon}$ \COMMENT{Column Normalization}
       \ENDFOR
       \STATE {\bfseries return} $M$
    \end{algorithmic}
    \end{algorithm}

    \begin{figure}
        \centering
    \includegraphics[width=\linewidth]{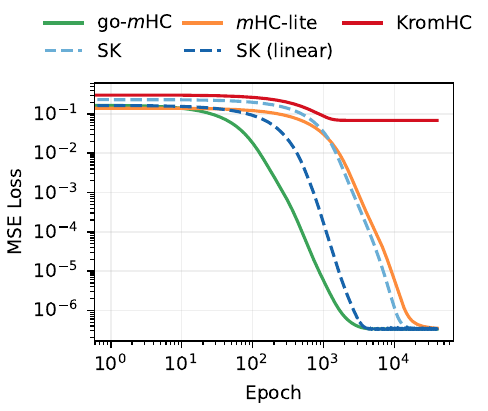}
        \caption{We additionally show loss curves that include \mHC and another variant of \mHC that transforms matrices linearly. We argue that the linear transformation is preferable in the context of our toy model due to faster convergence and the absence of vanishing gradients. We note that the speed up we see is similar to that of go-\mHC over the other methods.}
        \label{fig:loss_comparison_with_SK}
    \end{figure}

    We show in figure~\ref{fig:loss_comparison_with_SK} that the convergence rate of \mHC\ with $\cP_{\text{SK linear}}$ is faster than that of $\cP_{\text{SK}}$, converging around the same time as go-\mHC. This speed-up is similar in order to that of \mHC-lite to go-\mHC. If we add a redundant softmax normalization to go-\mHC, this also slows it down -- verifying that the softmax is a choke for gradients when optimizing paths in the Birkhoff polytope.
    
    We suspect that go-\mHC\ starts out faster than even $\cP_{\text{SK linear}}$ due to the parameterization as a rotation matrix, which exists on a smooth manifold with large proportionate action. In Sinkhorn-Knopps, the path to a target matrix ``zig-zags'' due to the iterative procedure of updating (which usually forces you out of the manifold) followed by a projection down onto the manifold-of-interest.
    
    \subsection{Gradient Vanishing and Crosstalk in \mHC-lite}
    \label{app:mhclite_vanishing_gradients}
    
    Let $\vec \alpha$ be the learned $n!$ parameters for constructing a $n\times n$ doubly stochastic matrix
    as in \mHC-lite. We will calculate the gradient for parameter updates assuming mean squared error loss
    from some target matrix:
    
    \begin{align}
    &L = \dfrac{1}{n^2} \sum_{1\leq i, j\leq n} (\mathbf{B}_{ij}- \mathbf{T}_{ij})^2\\
    &\mathbf{B} = \sum_{k=1}^{n!} \alpha_k \mathbf{P}_k\\
    &\frac{\partial L}{\partial \mathbf{A}} = 
    \end{align}
    
    Analytically solving for the gradients in \mHC-lite when optimizing to reach a final permutation matrix $\mathbf{P}_{\text{target}}$, we get that
    
    \begin{equation}
    \frac{\partial L}{\partial z_{\text{target}}} = \alpha_{\text{target}} \left( \frac{\partial L}{\partial \alpha_{\text{target}}} - \sum_j \alpha_j \frac{\partial L}{\partial \alpha_j} \right)
    \end{equation}
    
    Where $\alpha = \text{softmax}(\mathbf{z})$, the subscript target corresponds to the entry at the index corresponding to $\mathbf{P}_{\text{target}}$, and the loss gradient wrt. $\alpha$ is:
    
    \begin{align}
    \frac{\partial L}{\partial \alpha_i} &= \frac{2}{n^2} \left\langle \sum \alpha_k \mathbf{P}_k - (\mathbf{P}_{\text{target}} + \mathbf{E}), \mathbf{P}_i \right\rangle\\
    &= \frac{2}{n^2} \Biggl( \;\;
        \underbrace{\alpha_i \langle \mathbf{P}_i, \mathbf{P}_i \rangle}_{\text{Self-Correction}} 
        - \underbrace{\langle \mathbf{P}_{\text{target}}, \mathbf{P}_i \rangle}_{\text{Signal}} \nonumber\\
        &\;\;\;\;\;\;\;\;\;\;\;\; + \underbrace{\sum_{k \neq i} \alpha_k \langle \mathbf{P}_k, \mathbf{P}_i \rangle}_{\text{Crosstalk}} - 
        \underbrace{\langle \mathbf{E}, \mathbf{P}_i \rangle}_{\text{Noise Floor}} 
    \;\;\Biggl)
    \end{align}

    Note that the gradient update for $z_{\text{target}}$ is gated by the factor $\alpha_{\text{target}}$, which creates an inherent tension in early optimization: when $\alpha_{\text{target}}$ is initialized near $\frac{1}{n!}$ (i.e., the softmax is approximately uniform), this prefactor is small, suppressing the effective gradient signal and slowing initial convergence toward $\mathbf{P}_{\text{target}}$. This is a structural feature of the softmax parameterization---the model cannot ``commit'' quickly to a target permutation until it has already begun to assign probability mass to it, yielding a self-reinforcing but initially sluggish convergence regime.

The crosstalk term $\sum_{k \neq i} \alpha_k \langle \mathbf{P}_k, \mathbf{P}_i \rangle$ captures interference from competing permutations. Since all permutation matrices satisfy $\langle \mathbf{P}_k, \mathbf{P}_i \rangle \in \{0, 1, \ldots, n\}$ depending on the number of shared assignments, permutations that agree with $\mathbf{P}_i$ on many positions contribute disproportionately to this term. In the regime where $\alpha$ is near-uniform, the crosstalk sum approximates $\langle \mathbf{B}, \mathbf{P}_i \rangle$ with $\mathbf{B} \approx \frac{1}{n}\mathbf{J}$ (where $\mathbf{J}$ is the all-ones matrix), corresponding to the Birkhoff--von Neumann barycenter of the permutation polytope. This provides a geometric interpretation: early in training, the gradient updates push $\mathbf{B}$ away from the barycenter toward $\mathbf{P}_{\text{target}}$, but the crosstalk from near-uniform $\alpha$ keeps $\mathbf{B}$ gravitating toward the interior of the polytope. As $\alpha_{\text{target}}$ grows and mass concentrates, crosstalk diminishes and the gradient signal from the \textbf{Signal} term dominates, driving $\mathbf{B}$ toward a vertex of the polytope.

The \textbf{Noise Floor} term $\langle \mathbf{E}, \mathbf{P}_i \rangle$ contributes a constant bias to each gradient, reflecting the deviation of the target $\mathbf{T} = \mathbf{P}_{\text{target}} + \mathbf{E}$ from a pure permutation. When $\mathbf{E} = \mathbf{0}$ (i.e., the target is itself a permutation matrix), this term vanishes and the gradient structure simplifies considerably. For noisy targets, however, the noise floor introduces a persistent gradient offset that prevents exact convergence to any vertex of the polytope, instead biasing $\mathbf{B}$ toward a weighted combination of permutations that collectively minimize projection error against $\mathbf{E}$.
        
    \section{Symmetry Breaking and Residual Stream Projections}
    \label{app:symmetry_breaking}

    The residual stream in Hyper-Connections has non-trivial boundaries -- connections to computation layers ($\cF$) through projection matrices (e.g. $\cH^{\text{pre}}$). While the mixing terms have imposed constraints (in the Birkhoff Polytope for the baselines and our method, any highly imposed symmetry from factor products in KromHC), the effective mixing term into $\cF$ includes a component derived by the read and write matrices in the model. An example pathway we can come up with is that where the residual block learns to average out a feature in a trace by writing $-x$ back to the residual stream on one of the streams, causing an asymmetric write even if the manifold usually has an imposed symmetry on mixing. We can also have a pathway where from the residual blocks point of view, $\cH_i^{\text{pre}} \odot (\cH_i^{\text{res}}X_j)$ breaks the symmetry of $\cH_i^{\text{res}}$ and similarly allow for asymmetric reads.
    
    In this appendix, we investigate the effect of adding a read term to our toy model and see that it doesn't impact the claims in the main paper as shown in figure~\ref{appfig:Hpre_symmetry_breaking}.

    \begin{figure}[th]
        \centering
        \includegraphics[width=\linewidth]{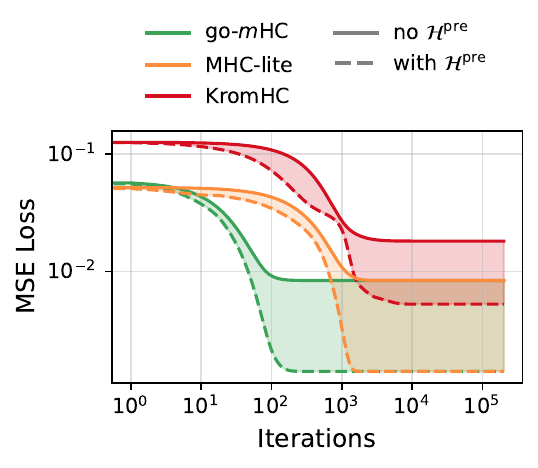}
        \caption{Sensitivity of convergence trajectories to the inclusion of a stream read term $\mathcal{H}^{\text{pre}}$ which can break KromHC's symmetry. We compare the impact of the learning signal being generated after an $\mathcal{H}^{\text{pre}}$ projection (dashed lines) against the signal originating from within the residual stream only (solid lines). While the inclusion of a learned $\mathcal{H}^{\text{pre}}$ allows for symmetry breaking in KromHC, it lowers the overall MSE floor (shaded regions) for all methods due to it's ability to sample outside the set of stochastic matrices -- which seems to be the dominant effect here. The relative performance hierarchy across all models remains invariant to the choice of toy model. In both configurations, go-\mHC (green) demonstrates the fastest convergence and lowest final loss, whereas KromHC (red) consistently exhibits the highest error and, along with \mHC-lite, the slowest training dynamics. This suggests that the advantages of the our proposed method are independent of the read/write symmetry-breaking in Hyper Connections.}
        \label{appfig:Hpre_symmetry_breaking}
    \end{figure}
    
    We introduce a tensor contraction into our model that implements a ``circuit''
    involving the $\cH^{\text{pre}}$ term. While $\cH^{\text{res}}$
    calculated via Kronecker products preserves symmetry, $\cH^{\text{pre}}$ is allowed to break this symmetry. We now assume the best case scenario of static
    mappings where we learn input-independent $\cH^{\text{pre}}$ and $\cH^{\text{res}}$ with the objective:
    \begin{equation}
        \mathcal{L}_i=\sum_j|| \cH_i^{\text{pre}} \odot (\cH_i^{\text{res}}X_j) - \mathbf{P}_i \odot (\mathbf{T}_iX_j+ \vec \epsilon)||^2_2
    \end{equation}
    where $\odot$ denotes a linear projection from the $d$-residual streams to $1$ stream in the MLP input basis and $\mathbf{P}_i$ denotes a random target projection matrix.
    
    When comparing the ability of $\cH^{\text{pre}}$ to break symmetry when
    reading from (or equivalently, $\cH^{\text{post}}$ writing to) the residual stream, we compare the convergence of a learned $\cH^{\text{pre}}_i$
    transformation with that of fixing $\cH^{\text{pre}}_i = \mathbf{P}_i$ to normalize the loss units.

    \section{Geometry of $s$-Orthostochastic Matrices}
    \label{app:geometry}

    In the main text, we use the spectrum as a surrogate for expressivity due to its ease of interpretability while still being able to represent all $d\times d$ $k$-cycles for $k\leq d$. The spectrum can be thought of as the ``shadow'' of the Birkhoff polytope. In figure~\ref{appfig:orthostochastic_boundary_3cycle}, we show a slice through the polytope instead of the shadow, where we look at the polygon composed of the $3$-cycle in the $d=3$ case. Namely the matrices defined by the cyclic permutations: $(), (123), (132)$ are the corners of the slice shown in figure~\ref{appfig:orthostochastic_boundary_3cycle}. The coverage in figure~\ref{appfig:orthostochastic_boundary_3cycle} for $s=1$ and $s=2$ matches that of the spectrum coverage within the Karpelevi\v{c} region. This motivates our choice of a metric that approximates expressivity despite it only probing the ``shadow'' of the polytope. We know the $s$-unistochastic set is dense, and don't expect there to be holes within it. We use the emperical results that we can usually approximate $s$-unistochastic matrices with an $s$-orthostochastic matrix to justify that the holes are few to non-existent.

To visualize the density and support of $s$-orthostochastic matrices within the Birkhoff polytope $\mathcal{B}_3$, we project \mbox{$\mathbf{B} \in \mathcal{B}_3$} onto the 2D plane defined by the cyclic group $\mathcal{C}_3 = \{I, \sigma, \sigma^2\}$. We compute coefficients $c_k = \frac{1}{3} \text{Tr}((\sigma^k)^\top B)$ and map them to $\mathbb{R}^2$ using the vertices of an equilateral triangle $\{v_k\}_{k=0}^2$:
\begin{equation}
    \mathbf{x} = \sum_{k=0}^2 c_k v_k, \quad v_k = \left( \sin\frac{2\pi k}{3}, \cos\frac{2\pi k}{3} \right)
\end{equation}
The support is estimated by sampling $N=10^8$ matrices and computing a binary occupancy grid over the projected coordinates, with boundaries extracted via contouring the resulting indicator function.

We alternatively also show multiple 3D visualiztions of the polytope using different support bases and rotations in $\mathbb{R}^3$ in figure~\ref{appfig:birkhoff_polytope_slices}. 

    \begin{figure}[ht]
        \centering
        \includegraphics[width=1\linewidth]{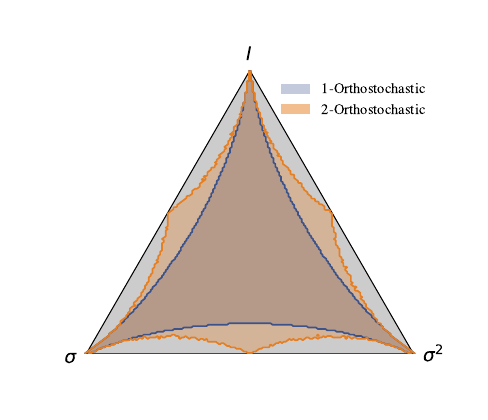}
        \caption{Numerically calculated projection on the 3-cycle $I, \sigma, \sigma^2$ with $\sigma$ being a cyclic permutation matrix ($\sigma^3=1$).}
        \label{appfig:orthostochastic_boundary_3cycle}
    \end{figure}
    
    \begin{figure}[h]
        \centering
        \includegraphics[width=\linewidth]{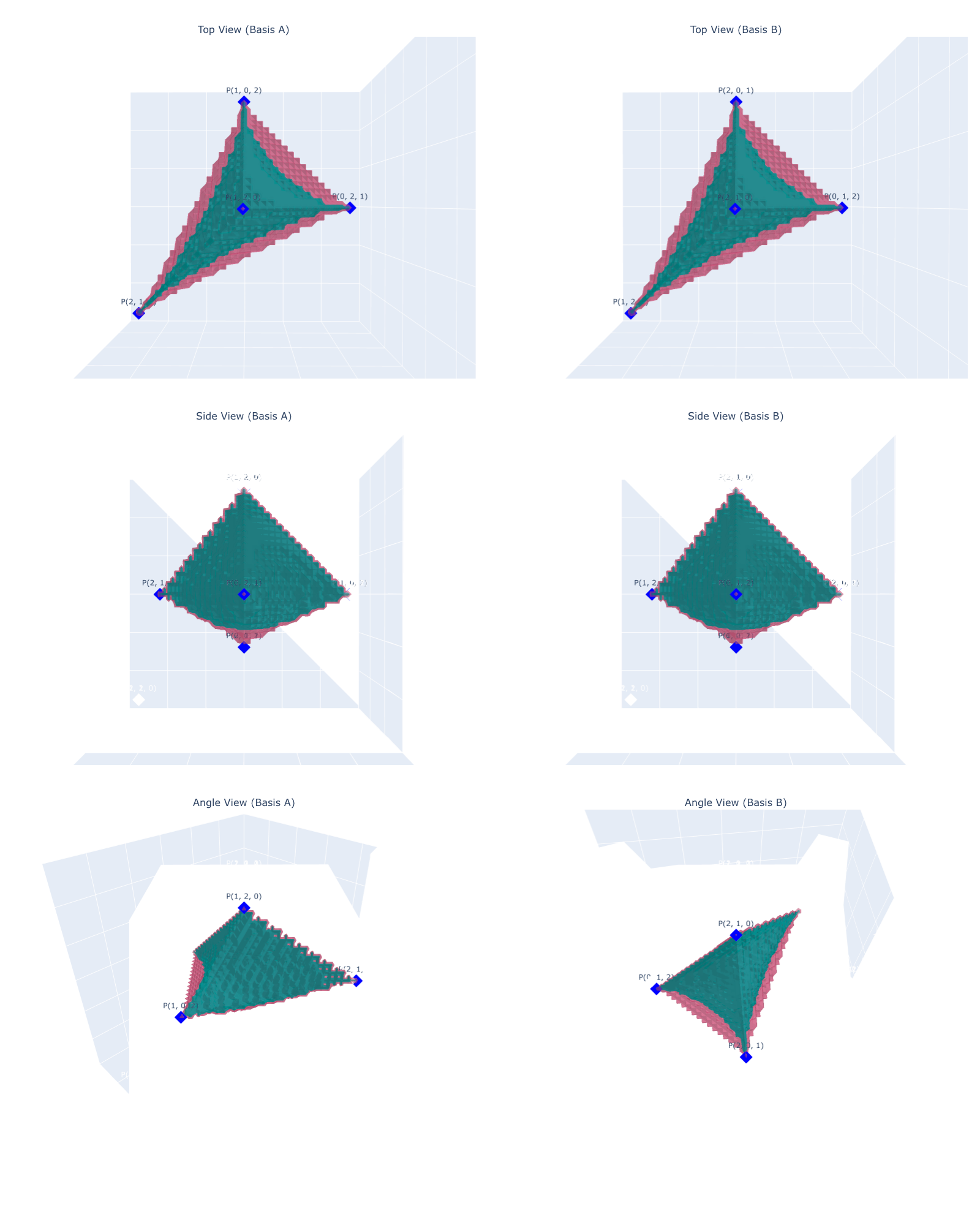}
        \caption{Visualizations of the Birkhoff Polytope $\mathsf{B}_d$. The green region denotes the support of the $1$-orthostochastic matrices and the pink region is the more general coverage of the Birkhoff polytope that isn't covered when $s=1$.}
        \label{appfig:birkhoff_polytope_slices}
    \end{figure}

    \section{The Orthostochastic--Unistochastic Gap}
    \label{app:stochastic_gap}

    While we used orthogonal matrices in go-\mHC, we in principle could have also used unitary matrices with complex entries instead. We conjecture that the implications on the geometry and expressivity are minimal when $s\geq 2$, and that the convergence is of similar order of magnitude, although it might exhibit a different curvature depending on the position and velocity vectors within the manifold.
    
    If expressivity is reduced, we expect that when sampling matrices, we get worse overall performance. We see in the 3 examples from figure~\ref{appfig:unistochastic_gap} that the total achievable loss is similar for unistochastic and doubly stochastic matrices for $d\in \{3, 6\}, s=2, \epsilon\in\{10^{-4}, 10^{-2}\}, \sigma_p \in \{0, 1\}$, regardless of the target matrix location. Plot~\ref{appfig:unistochastic_gap}(a) targets a matrix at the boundary of the polytope (specifically, the average of two permutation matrices) and Plot~\ref{appfig:unistochastic_gap}(b-c) target a random matrix sampled from $\cU(0, 1)$ with Sinkhorn-Knopp applied on it.
    
    Since the boundaries of the sets of orthostochastic and unistochastic matrices are the
    same, we additionally expect that they are both equally able to express the boundary of the polytope, except for the additional relative phases that need to cancel out in the unitary matrices. We see this manifest as slower convergence for the unitaries.

    On the other hand, for a random point within the polytope, there are many more solutions of phases
    that satisfy the target matrix condition, and therefore, we see faster convergence. We think this is due to being able to go to a point that is overall closer to the initialization due to the phase terms allowing for interference.

    \begin{figure}[ht]
    \centering
    
    \begin{subfigure}{\linewidth}
    \vspace{0.3em}
        \centering
        \textbf{(a)} \hfill \textbf{Boundary Targets, $d=3$} \hfill \hfill \phantom{(a)} 
        \includegraphics[width=\linewidth]{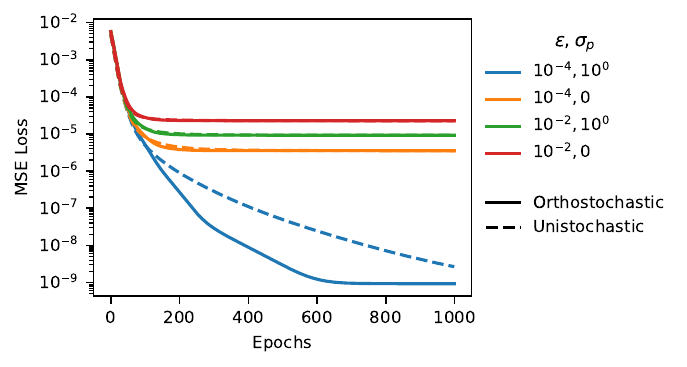}
        \vspace{-1em} 
    \end{subfigure}

    \begin{subfigure}{\linewidth}
        \centering
        \textbf{(b)} \hfill \textbf{Random Targets, $d=3$} \hfill  \hfill \phantom{(b)}
        \includegraphics[width=\linewidth]{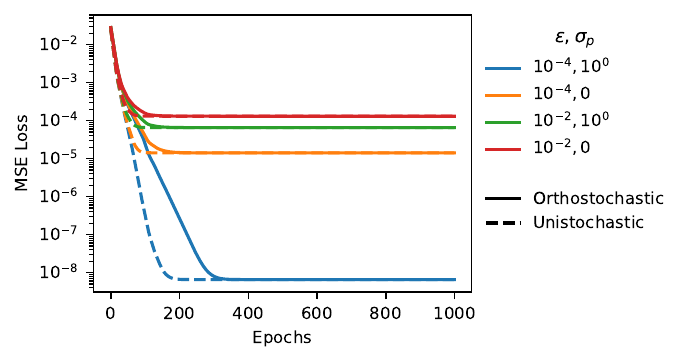}
        \vspace{-1em}
    \end{subfigure}

    \begin{subfigure}{\linewidth}
        \centering
        \textbf{(c)} \hfill \textbf{Random Targets, $d=6$}\hfill  \hfill \phantom{(c)} 
        \includegraphics[width=\linewidth]{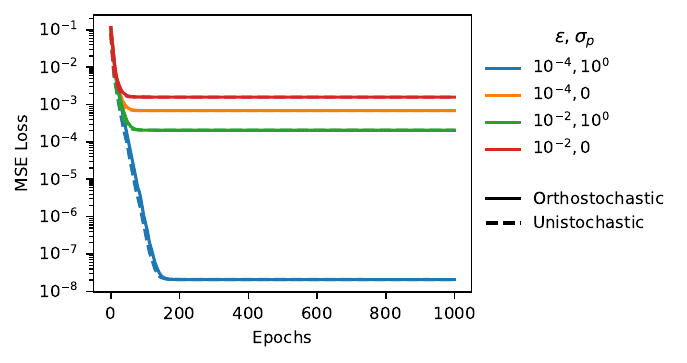}
    \end{subfigure}

    \caption{}
    \label{appfig:unistochastic_gap}
\end{figure}

    \section{Other Performance \& Distance Metrics}
    \label{app:metrics_main}
    
    \subsection{The Geodesic Distance}
    \label{app:geodesic}

    To better understand the rate of convergence of different methods, we investigate the use of a distance metric to quantify the ``progress'' and ``degree of change'' in the path taken within the polytope. Given any doubly stochastic matrix $\mathbf{W}$, we can estimate a matrix $\mathbf{R}^W$ in $SO(d)$ such that $\mathbf{W}_{ij}\approx \left(\mathbf{R}^W_{ij}\right)^2$ -- we can also calculate higher-dimensional generalizations of this (using the norm of $s\times s$ blocks to have a matrix in $SO(ds)$) to reduce the approximation error. We don't explore higher-dimensional generalization of this method here for brevity.

    We implement a version of the $s=1$ reverse map $\mathbf{W}\to \mathbf{R}^W$ as:
    \begin{equation}
        \mathbf{R}^W = \mathbf{H} \odot \mathbf{W}^{\odot \frac{1}{2}}
    \end{equation}
    with $\mathbf{H}$ a parity matrix that fixes the sign-gauge symmety and $\mathbf{W}^{\odot \frac{1}{2}}$ denoting the elementwise square root.

    This then allows us to measure the geodesic distance in radians between $\mathbf{W}$ and a target matrix $\mathbf{T}$:
    \begin{equation}
        \phi_R(\mathbf{W}, \mathbf{T}) = \dfrac{1}{\sqrt{2}} \left\| \log \left( \left( \mathbf{R}^W \right)^\top \mathbf{R}^T \right)\right \|_F
    \end{equation}

    $\phi_R(\cdots)$ is the angle between the two approximate orthogonal matrices composing our target. We plot $\phi_R(\mathbf{S}, \mathbf{W})$ and $\phi_R(\mathbf{W}, \mathbf{T})$ in figure~\ref{fig:geodesic_example}.

    \begin{figure}
        \centering
        \includegraphics[width=\linewidth]{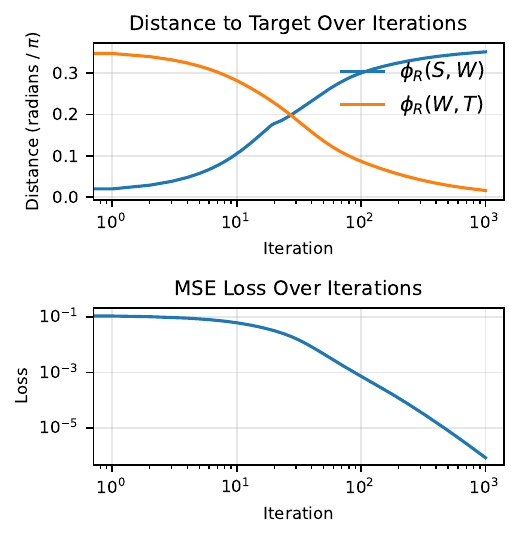}
        \caption{}
        \label{fig:geodesic_example}
    \end{figure}

    Additionally, to account for the $O(n)$ sign-gauge symmetry inherent in the unistochastic mapping $O_{ij} = M_{ij}^2$, we align the recovered matrix by performing a combinatorial sign-alignment (gauge fixing) to ensure the recovered rotation is compared to the target in the same quadrant of the Stiefel manifold.

    We don't discuss this metric in the main paper due to it's structure favoring our method, but we think this is an interesting metric to study convergence from, especially using the two angle metrics from start and to target, or a comparative angle between two methods.

    \subsection{KL Divergence}
    \label{app:kl_divergence}

    We also explore modelling matrix distance using the KL divergence due to their exact forms, instead of only relying on the Karpelevi\v{c} region and the geodesic distance (which is inherent to our method and potentially favors our formulation). Each row/column can be thought of as a discrete probability distribution and therefore, we can measure the distance between two distributions, the set of possible target distributions $\mathcal{T}$ and our method distribution $\mathcal{P}$. We show in figure \ref{fig:kl_comparison} results consistent with the rest of the paper where KromHC tends to have the worst approximation (higher KL divergence) and the approximation improves with $s$ and, emperically, it tends to be near the optimal limit (no divergence) when $s\geq 2$.

    \begin{figure}
        \centering
        \includegraphics[width=\linewidth]{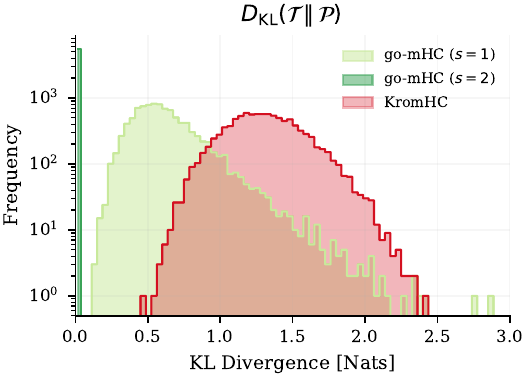}
        \caption{}
        \label{fig:kl_comparison}
    \end{figure}

    \subsection{Time to Convergence}

    Another measure we use in the main text is the epochs to convergence for each parameterization method. We also consider whether or not the method converges to the theoretical minimum achievable loss, calculated as a function of noise magnitude ($\epsilon$):

    \begin{equation}
        \mathcal{L}_{\text{min}}=\dfrac{\epsilon^2}{3}
    \end{equation}

    For most figures in the main text, we use $\epsilon=10^{-1}$, however the trend is irrespective of the value chosen.

    We calculate time to convergence as the time at which a method that reaches a $5\%$ threshold in loss away from it's final converged value. We showcase sample training curves where label the converged epoch in figure~\ref{fig:convergence_indiv_loss_trajectory}.

    In the loss curves shown throughout the paper (e.g. Figure~\ref{fig:loss_trajectory}), we can see go-\mHC\ converge at around 500 epochs for $s\in\{2,3\}$, while \mHC-lite converges significantly later (around 10,000 epochs). KromHC converges to a higher overall value. We investigate the limits of KromHC in figure~\ref{appfig:kromhc_error_vs_d}.
    
    \begin{figure}
        \centering
        \includegraphics[width=\linewidth]{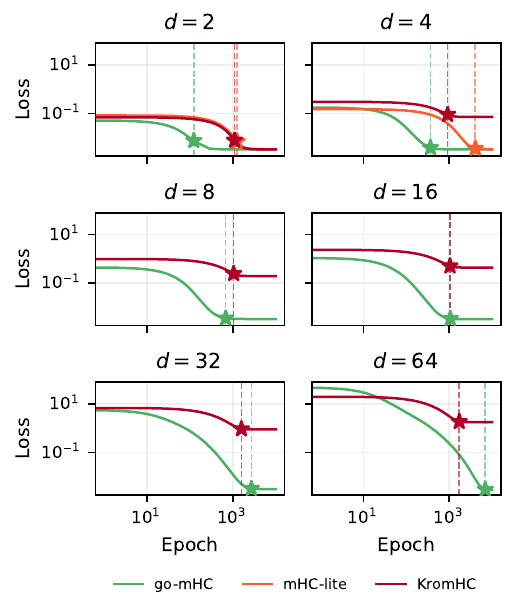}
        \caption{}
        \label{fig:convergence_indiv_loss_trajectory}
    \end{figure}
        \label{app:time_conv}

    \section{30M Parameter Language Model}
    \label{app:tiny_lm}

    \subsection{Model, Hyperparameters \& Training Details}
    \label{app:tiny_lm_hyper}

    We trained a 6-layer Transformer-based language model on the TinyStories dataset using a modified version of the nanoGPT codebase \cite{nanogpt}. 

    The model follows the decoder-only transformer architecture, specifically configured for the \texttt{TinyStories} synthetic dataset. The model consists of $6$ layers with an embedding dimension of $d_{\text{model}} = 384$. Each layer utilizes a multi-head attention mechanism with $h=6$ heads, resulting in a per-head dimension of $d_k = d_v = 64$. The position-wise feed-forward networks (FFN) employ a hidden dimensionality of $d_{ff} = 4d_{\text{model}} = 1536$.

    Our model utilizes the standard GPT-2 tokenizer with a vocabulary size of $V = 50,257$. The embedding layer, $\mathbf{W}_e \in \mathbb{R}^{V \times d_{\text{model}}}$, accounts for approximately two thirds of our parameters (20M of the 30M total parameters). We did not test smaller embedding matrices, but this would be interesting to do given the simpler vocabulary of \texttt{TinyStories}.
    
    Training was performed using the AdamW optimizer with a peak learning rate of $1 \times 10^{-3}$, decaying to $1 \times 10^{-4}$ via a cosine schedule over 20,000 iterations. To ensure training stability, we employed a linear warmup of 500 steps. We utilized an effective batch size of 128 sequences, achieved through gradient accumulation on a single NVIDIA RTX 4090 GPU using \texttt{bfloat16} mixed-precision training. Detailed hyperparameters are summarized in Table \ref{apptab:tinylm_hyperparams}.

    \begin{table}[ht]
    \centering
    \caption{Hyperparameter configurations for training on the TinyStories dataset.}
    \label{app:judges}
    \label{apptab:tinylm_hyperparams}
        \begin{tabular}{@{}llc@{}}
        \toprule
        \textbf{Category} & \textbf{Hyperparameter} & \textbf{Value} \\ \midrule
        \textbf{Model} & Layers & 6 \\
         & Attention Heads & 6 \\
         & Embedding Dimension & 384 \\
         & Context Window & 512 \\
         & Dropout & 0.1 \\ \midrule
        \textbf{Optimization} & Peak Learning Rate & $1 \times 10^{-3}$ \\
         & Minimum Learning Rate & $1 \times 10^{-4}$ \\
         & Learning Rate Schedule & Cosine Decay \\
         & Warmup Steps & 500 \\
         & Weight Decay & 0.1 \\ \midrule
        \textbf{Training} & Batch Size (Total) & 32 \\
         & Max Iterations & 20,000 \\
         & Mixed Precision & \texttt{bfloat16} \\ \bottomrule
        \end{tabular}
    \end{table}

    In our experiments, we swap out the residual stream with the various proposed hyper-connection variants, all of which were built on top of the code in Appendix J of \citet{zhu2025hyper} and the code published by \citet{yang2026mhclite}. We additionally use the implementation of KromHC from \citet{zhou2026kromhc}.

    \subsection{Loss Curves \& Training Stability}
    \label{app:tiny_lm_training}

    Figure~\ref{fig:cross_entropy_loss} shows the cross-entropy loss when training various proposed HC variants. There are no perceptable difference in the CE Loss between the different methods while training -- we suspect further experiments using smaller embedding matrices might help with any visibility issues here.

    \begin{figure}
        \centering
        \includegraphics[width=0.8\linewidth]{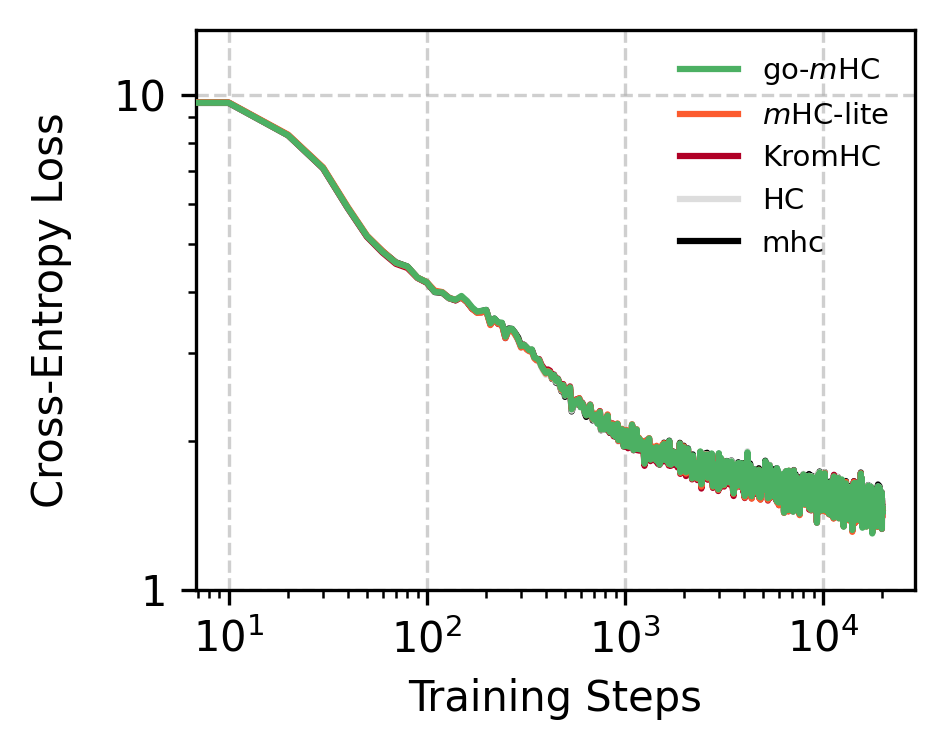}
        \caption{}
        \label{fig:cross_entropy_loss}
    \end{figure}

    We attempt to quantify training stability across architectural variants using a jitter metric $J_t$, in figure \ref{fig:cross_entropy_loss}b, as the normalized rolling standard deviation of the gradient norm $g_t = \|\nabla_{\theta} \mathcal{L}(\theta_t)\|_2$.Using a sliding window of size $w=10$, we calculate the rolling mean $\mu_t$ and standard deviation $\sigma_t$:$$\mu_t = \text{mean}(\{g_i\}_{i=t-w+1}^t) \quad , \quad \sigma_t = \text{std}(\{g_i\}_{i=t-w+1}^t)$$The final jitter is the Coefficient of Variation (CV), adjusted for numerical stability with $\epsilon = 10^{-8}$:$$J_t = \frac{\sigma_t}{\mu_t + \epsilon}$$This dimensionless ratio measures the relative volatility of gradient updates, allowing for a fair stability comparison between models with different gradient magnitudes.

    \begin{figure}[t]
        \centering
        \includegraphics[width=0.8\linewidth]{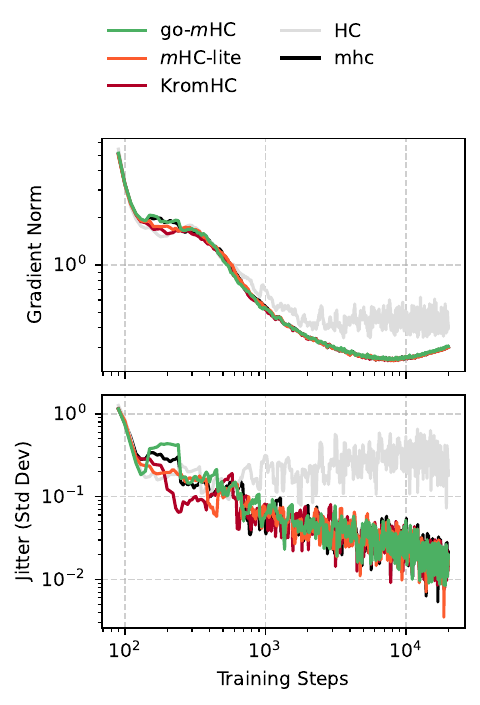}
        \caption{}
        \label{fig:gradient_norm_jitter_comparison}
    \end{figure}

    \begin{figure}
        \centering
        \includegraphics[width=0.8\linewidth]{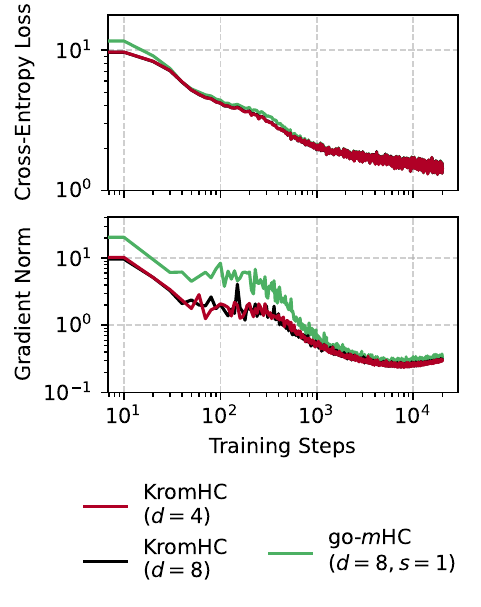}
        \caption{A version of figures~\ref{fig:cross_entropy_loss} \& \ref{fig:gradient_norm_jitter_comparison} with $d=8$ for KromHC and go-\mHC ($s=1$).}
        \label{fig:training_trace_d8}
    \end{figure}

    \subsection{LLM-as-a-Judge Results for GPT-4.1 Mini and Nano}
Tables~\ref{tab:mini_70} and \ref{tab:nano_70} provide the full performance breakdown for the smaller judge variants. Table~\ref{tab:pairwise_alignment} provides the inter-judge alignment.

\input{tables_llm_judge_appendix}

    \subsection{Total Number of Parameters}
    \label{app:tiny_lm_parameters}

    We detail the total number of parameters needed to parameterize our full language models using the various methods in table~\ref{tab:parameter_delta_grouped}. We note that for efficiency purposes, we generated skew-symmetric matrices using $d^2$ parameters instead of $\frac{d(d-1)}{2}$ parameters due to faster performance on GPUs from our tests, even for small $d$.

    Table~\ref{tab:parameter_delta_grouped} shows how to maintain using \mHC-lite at $d=8$, we need to allocate 1.4B parameters to our model that has 20M parameters for embedding and 10M parameters in the transformer blocks. This leads us to believe that \mHC-lite is not scalable with $d$. go-\mHC is consistently between \mHC-lite and KromHC, even for tiny $d$.

    \begin{table*}[h]
\centering
\caption{Model Parameter Overhead ($\Delta$) Grouped by Dimension $d$}
\label{tab:parameter_delta_grouped}
\begin{tabular}{lccc}
\hline
\textbf{Model Variant} & \textbf{Config} & \textbf{$\Delta$ Parameters (M)} & \textbf{Total Parameters (M)} \\ \hline
Residual Baseline      & ---             & 0.00                            & 30.14                         \\ \hline
{\mHC-lite}     & $d=4$ &  0.60                   & 30.74                \\
KromHC                 & $d=4$           & 0.23                            & 30.37                         \\
go-\mHC\ (s=1)         & $d=4$           & 0.46                            & 30.60                         \\
go-\mHC\ (s=2)         & $d=4$           & 1.34                            & 31.48                         \\ \hline
KromHC                 & $d=8$           & 0.84                            & 30.98                         \\
go-\mHC\ (s=1)         & $d=8$           & 2.98                            & 33.12                         \\
go-\mHC\ (s=2)         & $d=8$           & 10.06                           & 40.20                         \\
{\mHC-lite}     & $d=8$ & \textbf{1487.46}                & \textbf{1517.60}              \\ \hline
\end{tabular}
\end{table*}

    \section{Weighted-Average of Doubly-Stochastic Matrices}
    \label{app:alt_form}

    We discuss two alternative formulations one can use for parameterizing doubly-stochastic matrices. We add an additional $s$ learned parameters $\{\beta_i \vert i\in\{1, \cdots,s\}\}$ and instead weighted average a set of $s$ $d\times d$ orthostochastic matrices using the weights $\beta_i$.

    \subsection{Application to Orthostochastic\&Unistochastic Matrices}
    \label{app:alt_form_weight_avg_ortho}

    While we use the $d^2$ $s\times s$ blocks in our construction to build the class of matrices, this scales as $\cO(s^3d^3)$. We have found that an alternative formulation here where a transition matrix can be described as the sum of Orthostochastic (or Unistochastic) matrices to be effective at filling the Birkhoff polytope. This scales as $\cO(sd^3)$ instead of $\cO(s^3d^3)$ and fills the Birkhoff polytope in a different manner. Any Doubly Stochastic matrix, in the limit of multiple sums, can be represented as the sum of other doubly stochastic matrices. While the Karpelevi\v{c} region is filled entirely almost immediately (see figure~\ref{appfig:avg_method}), this prioritizes filling the facets of the Birkhoff polytope first (hence the Karpelevi\v{c} region being filled entirely). This method however leaves substantial gaps while filling the facets and therefore we opt to use the method in the main text. Further exploration of this would be interesting as a potentially lower-complexity way of learning the manifold.

    \begin{figure}
        \centering
        \includegraphics[width=\linewidth]{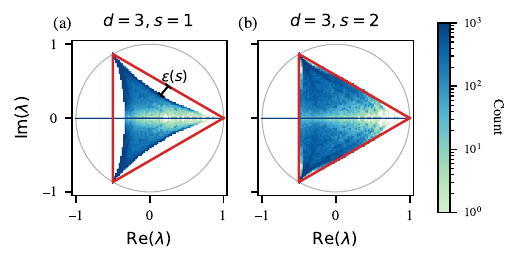}
        \caption{}
        \label{appfig:avg_method}
    \end{figure}
    
    \subsection{Application to KromHC}
    \label{app:alt_form_weight_avg_krom}

    KromHC introduces an inherent symmetry, we can attempt to break the symmetry by introducing weighted sums similar to section~\ref{app:alt_form_weight_avg_ortho} and summing the outcomes of indivdual KromHC forward passes. Emperically, we see that we need $n^2/\log(n)$ terms to achieve full expressivity, which takes us close to the limit of performance in go-\mHC.

\fi

\end{document}

%% file: table_llm_judge.tex
\begin{table}[ht]
\centering
\small
\caption{\textbf{Evaluation with GPT-4.1.} Results reflect the mean and standard error across $N=70$ samples, where each sample score is the consensus of 3 independent trials. \textbf{Bold} indicates the maximum value; \underline{underline} indicates scores within one standard error of the maximum. ICC(3,1) denotes intra-judge reliability across the 3 trials per sample.}
\label{tab:reg_70}
\resizebox{\linewidth}{!}{
\begin{tabular}{@{}lccc@{}}
\toprule
\textbf{Model} & \textbf{Grammar} & \textbf{Creativity} & \textbf{Consistency} \\ \midrule
Residual & $6.20 \pm 0.11$ & $\underline{5.87 \pm 0.09}$ & $4.83 \pm 0.09$ \\
\textit{m}HC-lite & $\underline{6.58 \pm 0.12}$ & $5.80 \pm 0.07$ & $\underline{5.13 \pm 0.11}$ \\
KromHC & $\underline{6.56 \pm 0.13}$ & $5.79 \pm 0.07$ & $5.07 \pm 0.11$ \\
go-\textit{m}HC ($s=1$) & $\underline{6.57 \pm 0.11}$ & $5.74 \pm 0.07$ & $\underline{5.12 \pm 0.09}$ \\
go-\textit{m}HC ($s=2$) & $\mathbf{6.63 \pm 0.12}$ & $\mathbf{5.88 \pm 0.06}$ & $\mathbf{5.22 \pm 0.11}$ \\ \midrule
\textbf{ICC(3,1)} & 0.890 & 0.671 & 0.825 \\ \bottomrule
\end{tabular}
}
\end{table}

%% file: tables_llm_judge_appendix.tex
\begin{table}[!ht]
\centering
\small
\caption{\textbf{Evaluation Performance: GPT-4.1 Mini.} Scores represent mean $\pm$ SEM ($N=70$). \textbf{Bold} indicates the maximum; \underline{underline} indicates scores within one standard error of the maximum.}
\label{tab:mini_70}
\resizebox{\linewidth}{!}{
\begin{tabular}{@{}lccc@{}}
\toprule
\textbf{Model} & \textbf{Grammar} & \textbf{Creativity} & \textbf{Consistency} \\ \midrule
Residual & $5.93 \pm 0.12$ & $\underline{5.84 \pm 0.06}$ & $4.58 \pm 0.09$ \\
\textit{m}HC-lite & $\mathbf{6.37 \pm 0.13}$ & $\mathbf{5.90 \pm 0.06}$ & $4.73 \pm 0.08$ \\
KromHC & $6.04 \pm 0.13$ & $5.82 \pm 0.06$ & $4.57 \pm 0.09$ \\
go-\textit{m}HC ($s=1$) & $\underline{6.26 \pm 0.13}$ & $\underline{5.88 \pm 0.07}$ & $4.64 \pm 0.08$ \\
go-\textit{m}HC ($s=2$) & $\underline{6.34 \pm 0.14}$ & $\underline{5.90 \pm 0.06}$ & $\mathbf{4.90 \pm 0.10}$ \\ \midrule
\textbf{ICC(3,1)} & 0.893 & 0.555 & 0.710 \\ \bottomrule
\end{tabular}
}
\end{table}

\begin{table}[!ht]
\centering
\small
\caption{\textbf{Evaluation Performance: GPT-4.1 Nano.} Scores represent mean $\pm$ SEM ($N=70$). \textbf{Bold} indicates the maximum; \underline{underline} indicates scores within one standard error of the maximum.}
\label{tab:nano_70}
\resizebox{\linewidth}{!}{
\begin{tabular}{@{}lccc@{}}
\toprule
\textbf{Model} & \textbf{Grammar} & \textbf{Creativity} & \textbf{Consistency} \\ \midrule
Residual & $6.20 \pm 0.10$ & $\underline{6.29 \pm 0.09}$ & $5.47 \pm 0.10$ \\
\textit{m}HC-lite & $\mathbf{6.47 \pm 0.10}$ & $\mathbf{6.33 \pm 0.07}$ & $\mathbf{5.82 \pm 0.09}$ \\
KromHC & $6.26 \pm 0.11$ & $6.25 \pm 0.08$ & $5.63 \pm 0.10$ \\
go-\textit{m}HC ($s=1$) & $\underline{6.45 \pm 0.10}$ & $\underline{6.30 \pm 0.10}$ & $5.70 \pm 0.09$ \\
go-\textit{m}HC ($s=2$) & $\underline{6.43 \pm 0.11}$ & $6.25 \pm 0.08$ & $\underline{5.80 \pm 0.11}$ \\ \midrule
\textbf{ICC(3,1)} & 0.834 & 0.605 & 0.685 \\ \bottomrule
\end{tabular}
}
\end{table}

\begin{table*}[ht]
\centering
\small
\caption{\textbf{Inter-Judge Alignment.} Pairwise comparison of rating consistency across $N=70$ samples. Spearman $\rho$ measures rank-order correlation, while Cross-Model ICC(3,1) assesses the absolute agreement between different model tiers acting as judges.}
\label{tab:pairwise_alignment}
\begin{tabular}{@{}llccc@{}}
\toprule
\textbf{Judge Pair} & \textbf{Metric} & \textbf{Grammar} & \textbf{Creativity} & \textbf{Consistency} \\ \midrule
\textbf{4.1 vs. 4.1 mini} & Spearman $\rho$ & 0.732 & 0.345 & 0.519 \\
                      & Cross-Model ICC & 0.817 & 0.535 & 0.639 \\ \addlinespace
\textbf{4.1 vs. 4.1 nano} & Spearman $\rho$ & 0.603 & 0.302 & 0.467 \\
                      & Cross-Model ICC & 0.748 & 0.405 & 0.567 \\ \addlinespace
\textbf{4.1 mini vs. 4.1 nano}& Spearman $\rho$ & 0.682 & 0.201 & 0.434 \\
                      & Cross-Model ICC & 0.786 & 0.318 & 0.393 \\ \bottomrule
\end{tabular}
\end{table*}